\documentclass[pdflatex,sn-mathphys-num]{sn-jnl}

\usepackage{graphicx}%
\usepackage{tikz}%
\usetikzlibrary{arrows.meta,positioning}%
\usepackage{multirow}%
\usepackage{amsmath,amssymb,amsfonts}%
\usepackage{amsthm}%
\usepackage{mathrsfs}%
\usepackage{bm}%
\usepackage[title]{appendix}%
\usepackage{xcolor}%
\usepackage{textcomp}%
\usepackage{manyfoot}%
\usepackage{booktabs}%
\usepackage{algorithm}%
\usepackage{algorithmicx}%
\usepackage{algpseudocode}%
\usepackage{listings}%
\usepackage{placeins}%

\theoremstyle{thmstyleone}%
\newtheorem{theorem}{Theorem}%
\newtheorem{proposition}[theorem]{Proposition}%
\newtheorem{corollary}[theorem]{Corollary}%
\theoremstyle{thmstyletwo}%
\newtheorem{remark}{Remark}%
\theoremstyle{thmstylethree}%
\newcommand{\dd}{\,\mathrm{d}}
\newcommand{\R}{\mathbb{R}}
\newcommand{\inner}[2]{\langle #1,\, #2\rangle}
\newcommand{\Xop}{\mathcal{X}}
\newcommand{\Top}{\mathcal{T}}
\newcommand{\Lop}{\mathcal{L}}
\newcommand{\Rop}{\mathcal{R}}
\newcommand{\Tmat}{\Theta}
\newcommand{\eps}{\varepsilon}
\DeclareMathOperator*{\argmin}{arg\,min}

\begin{document}

\title[Weak-form fractional differential equation discovery]
{Robust data-driven discovery of fractional differential equations via weak formulations and Pareto-based subset selection}

\author*[1]{\fnm{Pongpisit} \sur{Thanasutives}}\email{pongpisit.thanasutives@riken.jp}

\author[1,2]{\fnm{Yoshinobu} \sur{Kawahara}}

\affil*[1]{\orgdiv{Center for Advanced Intelligence Project (AIP)}, \orgname{RIKEN}, \orgaddress{\city{Tokyo}, \country{Japan}}}

\affil[2]{\orgdiv{Graduate School of Information Science and Technology}, \orgname{The University of Osaka}, \orgaddress{\city{Osaka}, \country{Japan}}}

\abstract{
Fractional partial differential equations describe nonlocal dynamics, but discovering them from noisy data is difficult because fractional differentiation amplifies high-frequency measurement noise and the derivative orders are unknown. We propose Weak-Pareto, which combines an adjoint-consistent weak formulation of fractional terms with Pareto-based subset selection over discrete term types and continuous fractional orders. For linear right-hand-side terms, the adjoint transfers fractional operators from measured fields to smooth test functions, replacing noise-sensitive pointwise differentiation with smoothing integration; for nonlinear terms, the noise-suppression effect is partial yet useful. Coefficients are fitted by ridge regression within a branch-aware differential-evolution search over the orders. The support size is then selected at the validation-error--complexity elbow. We show that the variance of fixed linear right-hand-side weak features vanishes under grid refinement, whereas noise amplification in pointwise fractional features increases with derivative order. Across fractional advection--diffusion, reaction--diffusion, and Burgers benchmarks, Weak-Pareto recovers parsimonious structures from clean and noisy measurements. In controlled advection--diffusion and Burgers comparisons, it retains the correct support at every tested multiplicative-noise level, whereas the unregularised strong-form counterpart largely fails once noise is introduced; this advantage persists under additive Gaussian noise. Ablations show that the weak library drives noise robustness and that continuous-order Pareto search avoids the support-selection failure of a dense fixed dictionary. On the advection--diffusion benchmark, Weak-Pareto yields more consistent operator recovery and substantially lower measured runtime than a neural baseline. A two-dimensional extension of Weak-Pareto can recover distinct coordinate-dependent orders, including a three-term anisotropic model under noisy conditions.
}

\keywords{fractional differential equations, data-driven equation discovery, weak formulation, Pareto-based subset selection, differential evolution, model selection}

\pacs[MSC Classification]{35R11, 65M32, 62J07, 35R30}

\maketitle

\section*{Graphical Abstract}
\begin{center}
\includegraphics[width=\textwidth]{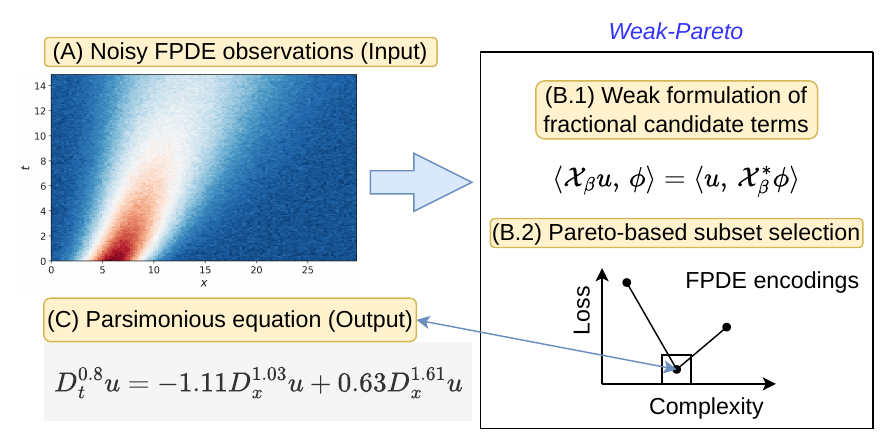}
\end{center}

\section{Introduction}\label{sec:intro}

Data-driven equation discovery aims to infer an interpretable, closed-form model directly from observations, rather than deriving it purely from first principles. Methods such as sparse identification of nonlinear dynamics (SINDy) and PDE functional identification (PDE-FIND) select a small set of active terms from an overcomplete candidate library by combining regression with sparsity-promoting model selection \cite{brunton2016sindy,rudy2017pdefind,schaeffer2017}. Their central principle is parsimony: the governing equation should contain only the important terms needed to explain the observed dynamics \cite{mangan2017modelselection}.

Many important real-world systems are nonlocal. Fractional differential equations, including fractional partial differential equations (FPDEs), describe anomalous diffusion, viscoelasticity, transport in heterogeneous media, and several biological and financial processes through real-valued derivative orders and nonlocal memory or spatial action \cite{podlubny1999,metzler2000anomalous,kilbas2006}. Discovering these equations would allow the data to reveal both the governing terms and the degree of nonlocality.

However, fractional equation discovery introduces two difficulties beyond the integer-order setting. First, fractional differentiation can amplify high-frequency measurement noise. For the periodic spectral operators used in several benchmarks, this mechanism is explicit in the frequency domain: an operator of order $\beta$ scales a Fourier mode of wavenumber $\kappa$ by a factor whose magnitude grows as $|\kappa|^\beta$. Strong-form methods therefore become increasingly fragile as the order or noise level grows. Second, the unknown orders are continuous. A fixed dictionary must either use a coarse order grid, which creates discretisation bias, or a dense grid, which produces many nearly collinear columns and thus destabilises support selection.

Weak formulations address the first difficulty for integer-order equations. Multiplying by smooth test functions and integrating by parts transfers derivatives from the measurements to the test functions, replacing pointwise derivative estimates with integral measurements. This principle underlies weak SINDy and related integral, Galerkin, and neural weak-form methods, which are substantially more noise-robust than strong-form regression \cite{gurevich2019,reinbold2020,messenger2021wsindy_pde,messenger2021wsindy_mms,schaeffer2017integral,tang2023weakident,fasel2022esindy,stephany2024weakpdelearn}. Complementary uncertainty-aware model-selection methods, including the uncertainty-penalised Bayesian information criterion (UBIC), improve robustness by penalising candidate terms whose inferred coefficients exhibit high uncertainty \cite{thanasutives2024ubic}.

Existing fractional-discovery methods, however, still evaluate fractional derivatives pointwise, either directly on a mesh or after reconstructing the field with a neural network \cite{gulian2019,pang2019fpinn,yu2025fde}. Extending the weak formulation to fractional operators is not trivial, because Caputo, Riemann--Liouville, Gr\"unwald--Letnikov, Riesz, and periodic spectral derivatives have different adjoints and boundary terms. A weak feature is valid only when it uses the adjoint of the operator being identified. To address the aforementioned difficulties, we propose \emph{Weak-Pareto}, a data-driven discovery framework that couples an adjoint-consistent weak formulation of the fractional candidate library with continuous-order, Pareto-based subset selection. Table~\ref{tab:method-positioning} characterises the differences between Weak-Pareto and existing equation-discovery methods. Note that notable symbolic-regression discovery frameworks \cite{schmidt2009distilling,cranmer2020discovering} search over algebraic expression trees built from a fixed operator set, and usually do not support a continuously varying differential order.

\begin{table}[t]
\centering
\footnotesize
\setlength{\tabcolsep}{0.2pt}
\caption{Comparison with prior equation-discovery methods. ``Cont.\ order'' means real-valued order optimisation rather than selection from a fixed order dictionary. ``Search strategy'' describes how active terms or a prescribed model are identified; ``Frac.\ adjoint'' means that weak features use the adjoint and boundary terms of the represented fractional operator.}\label{tab:method-positioning}
\begin{tabular*}{\textwidth}{@{\extracolsep\fill}lccccc}
\toprule
Method & Weak & Fractional & Cont.\ order & Search strategy & Frac.\ adjoint \\
\midrule
WSINDy / WeakIdent \cite{messenger2021wsindy_pde,tang2023weakident} & yes & no & no & sparse regression & no \\
Weak-PDE-LEARN \cite{stephany2024weakpdelearn} & yes & no & no & sparse regression & no \\
Gulian et al.\ \cite{gulian2019} & no & yes & yes & prescribed structure & no \\
Yu et al.\ \cite{yu2025fde} & no & yes & yes & sparse regression & no \\
Weak-Pareto (this work) & yes & yes & yes & best subset & yes \\
\botrule
\end{tabular*}
\end{table}

To our knowledge, Weak-Pareto is the first equation-discovery framework to achieve all five properties in Table~\ref{tab:method-positioning}. Our core contributions are threefold.

\begin{enumerate}
\item \textbf{An adjoint-consistent weak library for fractional operators} (Section~\ref{sec:weak}). Each candidate uses the adjoint and boundary terms of its declared operator. For linear right-hand-side terms, the fractional derivative acts only on smooth test functions; those library columns therefore do not differentiate the measured field. Proposition~\ref{prop:variance} shows that the variance of a fixed linear right-hand-side weak feature vanishes under grid refinement, whereas the variance of a pointwise positive-order feature diverges at a rate that increases with derivative order. For nonlinear terms, we show explicitly that the weak row is an averaged projection of the corresponding strong feature and quantify the resulting noise-induced bias. For a fixed support, Proposition~\ref{prop:local-identifiability} further characterises when a spatial-order perturbation can be absorbed, to first order, by refitting the active coefficients.
\item \textbf{A branch-aware, continuous-order Pareto search} (Section~\ref{sec:pareto}). Candidate terms are encoded as integer powers paired with continuous fractional orders. Coefficients are fitted analytically inside a differential-evolution search over the orders, while subunit, exact-integer, and superunit Caputo modes are treated as distinct branches. The support size is then selected at the validation-error--complexity elbow. This avoids both the discretisation error imposed by coarse fixed dictionaries and the severe collinearity of dense ones. After selecting the support and temporal branch, we evaluate the operators at locally refined orders and refit the coefficients on the full data. This removes interpolation error from the reported coefficients, although the selected model can still depend on the original order grid and search trajectory.
\item \textbf{Controlled analyses and baseline comparisons} (Section~\ref{sec:exp}). Library-only ablations isolate the advantage of weak over pointwise measurements; fixed-dictionary ablations isolate the advantage of continuous-order search; and a controlled comparison assesses an adapted neural fractional-discovery framework on the advection--diffusion benchmark. A semi-analytic fixed-support diagnostic tests the previously unreported superunit temporal branch. The experiments also identify the method's present limits on Riesz reaction--diffusion problems, demonstrate weak-form integral fitting on irregular frozen-soil creep data, and use an anisotropic two-dimensional example to showcase direction-labelled continuous-order encoding.
\end{enumerate}

On the fractional advection--diffusion (FADE) and fractional Burgers benchmarks, Weak-Pareto recovers the correct support in all five seeds at every tested multiplicative-noise level up to $20\%$. Under matched selection, the strong-form framework achieves no complete operator recovery in any noisy run. The two-dimensional example recovers the support, coordinate directions, and derivative orders in all 25 runs for each of its two benchmarks.

Fig.~\ref{fig:overview} connects the main stages of Weak-Pareto to concrete results from the supplied experiments. Panel~(a) visualises the FADE benchmark under $10\%$ additive Gaussian noise. Panel~(b) combines the adjoint transfer used for linear weak features with the branch-aware Pareto search over support size, discrete term types, and continuous fractional orders. Panel~(c) displays the closed-form encoding, the correct-support recovery pattern in the additive-Gaussian comparisons, the representative FADE discovery reported in Appendix~\ref{app:equations}, and an example noisy estimate for the three-term anisotropic two-dimensional benchmark~\eqref{eq:twod-b}. The recovery counts refer specifically to correct-support recovery; complete operator recovery additionally requires the fractional orders to satisfy the declared tolerances.

\begin{figure}[t]
\centering
\includegraphics[width=\textwidth]{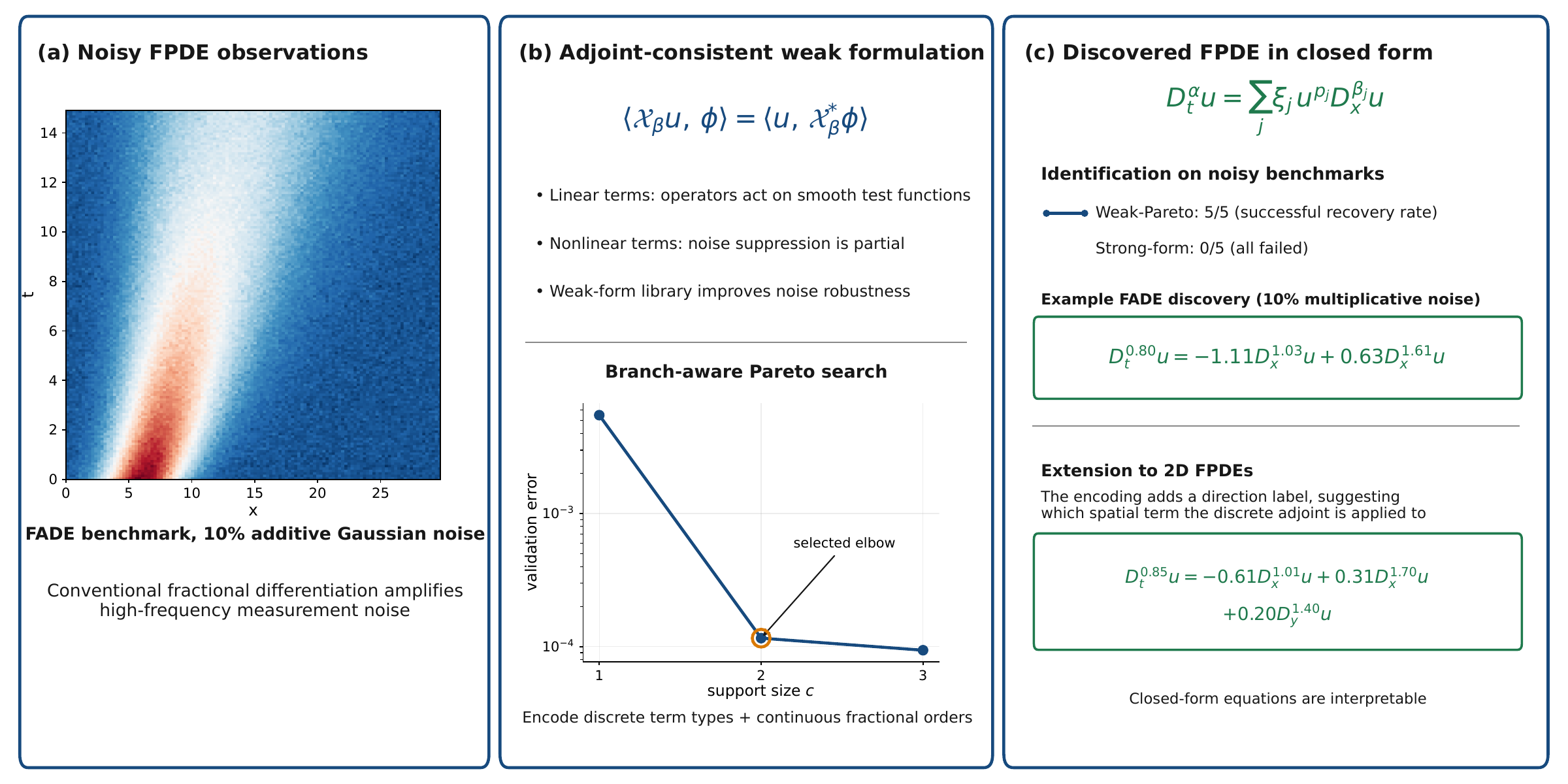}
\caption{Overview of Weak-Pareto. (a) Visualisation of the FADE benchmark with $10\%$ additive Gaussian noise, illustrating the high-frequency perturbations amplified by fractional differentiation. (b) For a linear operator $\mathcal X_\beta$, the adjoint identity transfers differentiation from the measured field to a smooth test function. The lower schematic shows the branch-aware Pareto search over support size, discrete term types, and continuous fractional orders, with the selected validation-error--complexity elbow highlighted. Nonlinear weak features remain averaged measurements; therefore, their noise suppression is partial. (c) The candidate encoding and its discoveries remain explicit closed-form equations. For both the FADE and fractional Burgers experiments at $10\%$ additive Gaussian noise, Weak-Pareto recovers the correct support in $5/5$ runs, whereas the strong-form counterpart fails in all five runs (Appendix~\ref{app:altnoise}). By augmenting the encoding tuple with a direction label, we extend Weak-Pareto to facilitate data-driven FPDE discovery in two-dimensional settings~\eqref{eq:twod-b}}
\label{fig:overview}
\end{figure}

The remainder of the paper defines the model class (Section~\ref{sec:problem}), develops the weak library and its noise analysis (Section~\ref{sec:weak}), presents the Pareto search (Section~\ref{sec:pareto}), reports the experiments (Section~\ref{sec:exp}), and concludes in Section~\ref{sec:conclusion}. Proofs, numerical verification, and detailed settings are provided in the appendices.

\section{Problem formulation}\label{sec:problem}

\subsection{Fractional derivative operators}\label{subsec:operators}

We first consider a scalar field $u(t,x)$ on $Q=(0,T)\times\Omega$, with $\Omega\subset\R$, observed on a space--time grid. This notation covers the core method and analysis; Section~\ref{subsec:twod} gives an example extension to two spatial coordinates. The temporal operator is denoted by $\Top_{m_\alpha,\alpha}$, where $m_\alpha$ identifies the Caputo branch and $\alpha$ its order. We distinguish the subunit branch $m_\alpha=\mathrm{sub}$ for $0<\alpha<1$, the exact integer mode $m_\alpha=\mathrm{int}$ at $\alpha=1$, and, when the declared range extends above one, the superunit branch $m_\alpha=\mathrm{sup}$ for $1<\alpha<2$. Spatial terms use operators $\Xop_\beta$ of order $\beta$. We adopt the following standard definitions \cite{podlubny1999,kilbas2006}. For $\mu>0$ and noninteger $\gamma>0$, let $n=\lceil\gamma\rceil$. The left and right Riemann--Liouville (RL) integrals and derivatives on $[a,b]$ are
\begin{equation}
\begin{aligned}
({}_aI_z^{\mu}f)(z)&=\frac{1}{\Gamma(\mu)}\int_a^z (z-s)^{\mu-1}f(s)\dd s,
&({}_zI_b^{\mu}f)(z)&=\frac{1}{\Gamma(\mu)}\int_z^b (s-z)^{\mu-1}f(s)\dd s,\\
({}_aD_z^\gamma f)(z)&=\frac{\dd^n}{\dd z^n}\,({}_aI_z^{n-\gamma}f)(z),
&({}_zD_b^\gamma f)(z)&=(-1)^n\frac{\dd^n}{\dd z^n}\,({}_zI_b^{n-\gamma}f)(z).
\end{aligned}
\label{eq:rl-def}
\end{equation}
Here $\Gamma(\mu)=\int_0^\infty y^{\mu-1}e^{-y}\dd y$ is Euler's gamma function; integer-order derivatives have their usual meaning. The Caputo derivative subtracts the initial Taylor polynomial, ${}_a^{C}\!D_z^\alpha f={}_aD_z^\alpha[f-P_{n-1,a}f]$ with $P_{n-1,a}f(z)=\sum_{q=0}^{n-1}\frac{f^{(q)}(a)}{q!}(z-a)^q$ and $n=\lceil\alpha\rceil$. Below, $D_t^\alpha$ abbreviates ${}_0^{C}\!D_t^\alpha$ for noninteger $\alpha$, while $\partial_t$ denotes the exact integer operator. The right RL derivatives in Eq.~\eqref{eq:rl-def} arise as the adjoints of the corresponding left-sided operators (Section~\ref{subsec:adjoints}). On periodic domains we also use the Riesz operator and the directional spectral derivative, defined through their Fourier multipliers
\begin{equation}
\mathcal F[\Rop_\beta f](\kappa)=-|\kappa|^{\beta}\,\mathcal F[f](\kappa),
\qquad
\mathcal F[D_x^\beta f](\kappa)=(\mathrm i\kappa)^{\beta}\,\mathcal F[f](\kappa).
\label{eq:spectral-def}
\end{equation}
Here $\mathcal F$ denotes the Fourier transform and $\kappa$ is the spatial wavenumber. The Riesz operator $\Rop_\beta=-(-\Delta)^{\beta/2}$ has the non-positive multiplier $-|\kappa|^\beta$; thus $\beta=2$ recovers $\partial_x^2$, and a positive coefficient produces diffusion. The directional multiplier $(\mathrm i\kappa)^\beta$ uses the principal branch,
\begin{equation}
(\mathrm i\kappa)^{\beta}:=|\kappa|^{\beta}\exp\!\Bigl(\mathrm i\tfrac{\pi}{2}\beta\,\operatorname{sgn}(\kappa)\Bigr),\qquad \kappa\neq 0,
\label{eq:ik-branch}
\end{equation}
where $\operatorname{sgn}$ is the sign function and the zero mode is annihilated. The multiplier is conjugate-symmetric, $\overline{(\mathrm i\kappa)^\beta}=(\mathrm i(-\kappa))^\beta$; consequently, it maps real fields to real fields and models advective fractional transport; $\beta=1$ recovers $\partial_x$.

The searched model class is the parsimonious fractional equation
\begin{equation}
\Top_{m_\alpha,\alpha}u=\sum_{j=1}^{c}\xi_j\,u^{p_j}\,\Xop_{\beta_j}u.
\label{eq:model}
\end{equation}
The model has $c$ active terms, with integer powers $p_j$, spatial orders $\beta_j$, and coefficients $\xi_j$. The class includes fractional advection--diffusion ($p_j=0$) and nonlinear transport such as the Burgers term $u\,\partial_xu$ ($p=1,\beta=1$). We define $\Xop_0$ as the identity; hence $(p,0)$ represents the reaction term $u^{p+1}$. This is a modelling convention, not the $\beta\to0$ limit of the Riesz multiplier, which is $-(\mathrm{Id}-\Pi_0)$ because the zero Fourier mode is annihilated; it reduces to $-\mathrm{Id}$ only on mean-zero fields, and any sign is absorbed into $\xi_j$.

\subsection{FPDE encoding}\label{subsec:encoding}

Each candidate is represented by the tuple
\begin{equation}
\mathcal M=(m_\alpha,\alpha,\bm p,\bm\beta,\bm\xi),\qquad
\bm p=(p_1,\dots,p_c),\quad \bm\beta=(\beta_1,\dots,\beta_c),\quad \bm\xi=(\xi_1,\dots,\xi_c).
\end{equation}
Here $m_\alpha$ records the temporal branch. This label is crucial at an integer: an estimate such as $\alpha=0.999$ remains a subunit Caputo model and is not identified with $\partial_t$. The subunit, exact-integer, and superunit branch minima are therefore optimised separately and compared using the same validation objective.

Each right-hand-side term is represented by a discrete power $p_j$ and a continuous order $\beta_j$. Unlike fixed-library regression, this encoding does not enumerate all possible orders in advance. A finite fixed grid either omits the true order or becomes increasingly collinear as it is refined (Section~\ref{subsec:ablation}); Weak-Pareto instead assembles only the columns proposed by the search. For example, the FADE equation $D_t^{0.8}u=-\partial_xu+0.5D_x^{1.7}u$ is encoded by $m_\alpha=\mathrm{sub}$, $\alpha=0.8$, $\bm p=(0,0)$, $\bm\beta=(1,1.7)$, and $\bm\xi=(-1,0.5)$. At each support size $c$, the method searches for the best $c$-term equation. Within a fractional temporal branch it optimises $(\alpha,\beta_1,\dots,\beta_c)$; in the exact integer mode, $\alpha=1$ is fixed and only $\bm\beta$ is optimised. For fixed modes, powers, and orders, the coefficients follow from linear regression (Section~\ref{sec:pareto}). In multiple spatial coordinates, our encoding can be augmented by a direction label $d_j$ for each term. The example in Section~\ref{subsec:twod} uses $d_j\in\{x,y\}$ and searches the direction pattern together with the continuous orders; its reported implementation is restricted to linear terms.

\section{Weak formulation of fractional candidate terms}\label{sec:weak}

This section develops the first core contribution: weak features that are consistent with the adjoint and boundary structure of each fractional operator, together with a feature-level explanation of their noise-robustness advantage over pointwise differentiation.

\subsection{Test functions and weak features}\label{subsec:weakresidual}

Let $\{\phi_k\}_{k=1}^{K}$ be smooth, separable test functions
\begin{equation}
\phi_k(t,x)=\vartheta_{\ell_t}(t)\,\psi_{\ell_x}(x),\qquad \phi_k\in C^{s}(\overline Q).
\end{equation}
The index $k$ enumerates temporal--spatial test-function pairs $(\ell_t,\ell_x)$. For a localised test family, we use the term \emph{test window} for a test function together with the region on which it has appreciable weight. Here $s\in\mathbb N_0$, $\overline Q=[0,T]\times\overline\Omega$, and $C^s(\overline Q)$ denotes functions whose partial derivatives of total order at most $s$ extend continuously to $\overline Q$; the smoothness order is chosen for the highest derivative in the library. The implementation provides compactly supported bumps, localised Gaussian test windows, and global Fourier modes. Reported experiments use Gaussian test windows and, for periodic high-order Riesz cases, Fourier spatial modes; bumps are not used in the reported results. In this subsection, ``compact support'' refers only to localisation of a test function. Elsewhere, the support of a candidate model means its active-term set in the subset search. Compact support of a test function or periodicity removes the corresponding continuum boundary terms. Gaussian tests use the exact discrete adjoint of Section~\ref{subsec:discrete}, preserving the discrete identity without a zero-trace assumption. Each test function produces one scalar equation,
\begin{equation}
\inner{\Top_{m_\alpha,\alpha}u}{\phi_k}=\sum_{j=1}^{c}\xi_j\,\inner{u^{p_j}\Xop_{\beta_j}u}{\phi_k},
\qquad \inner{f}{g}=\int_Q f\,g\dd x\dd t.
\label{eq:weak-residual}
\end{equation}
The construction rests on the adjoint identity
\begin{equation}
\inner{\Lop f}{\phi_k}=\inner{f}{\Lop^\ast\phi_k}+B_{\Lop}(f,\phi_k),
\label{eq:adjoint-general}
\end{equation}
where $\Lop^\ast$ is the adjoint and $B_{\Lop}$ contains boundary and initial contributions. The same $\phi_k$ is used on both sides; the operator $\Lop$ determines its adjoint and boundary contribution. Compact support of a test function or periodicity removes continuum spatial boundary terms. For Gaussian tests, the exact discrete adjoint replaces a zero-trace assumption; the Caputo target retains its initial-condition correction. Linear spatial operators transfer completely from the data to the test function. For $u^p\Xop_\beta u$, the discrete adjoint gives an integrated projection of the strong nonlinear feature, averaging it without removing differentiation from the noisy data path.

\subsection{Adjoint identities}\label{subsec:adjoints}

\paragraph{Temporal (Caputo) target.} Let $n=\lceil\alpha\rceil$ and $P_{n-1,0}u(t,x)=\sum_{q=0}^{n-1}\partial_t^qu(0,x)t^q/q!$. For $0<\alpha<2$, $\alpha\ne1$, and test functions satisfying the terminal conditions required to remove the right-endpoint traces, fractional integration by parts gives
\begin{equation}
\inner{{}_0^{C}\!D_t^\alpha u}{\phi_k}
=\inner{\,u-P_{n-1,0}u\,}{({}_tD_T^\alpha)\phi_k}.
\label{eq:caputo-weak}
\end{equation}
Thus the subunit branch subtracts $u(0,\cdot)$, whereas the superunit branch also subtracts $t\,\partial_tu(0,\cdot)$. At $\alpha=1$, the trace-free continuum identity is $\int_Q u_t\phi_k=-\int_Q u\,\partial_t\phi_k$; reported Gaussian tests instead use the discrete transpose identity of Section~\ref{subsec:discrete}. The domain integral leaves $u$ undifferentiated, although the superunit branch still depends on $\partial_tu(0,\cdot)$. In the reported L1 experiments, $D_h^\alpha=\mathsf{L1}_{\alpha-1}D_1$ and $(D_h^\alpha)^\top=D_1^\top\mathsf{L1}_{\alpha-1}^\top$. Its first weights are endpoint-concentrated, implicitly treating the initial rate one-sidedly. Hence the target is weak in the interior but not derivative-free at the initial boundary. Remark~\ref{rem:temporal-target} gives the scaling and noise analysis. The optional Volterra target estimates the initial Taylor polynomial explicitly but is not used here.

\paragraph{Spatial terms.} For a linear spatial term the weak feature is
\begin{equation}
\theta_{k,\beta}=\inner{\Xop_\beta u}{\phi_k}=\inner{u}{\Xop_\beta^\ast\phi_k},
\label{eq:linear-spatial}
\end{equation}
where the adjoint depends on the operator definition. For a left Riemann--Liouville derivative in $x$ on $[a,b]$, suppressing the other variables, the continuum identity is
\begin{equation*}
\int_a^b ({}_aD_x^\beta u)(x)\,\phi(x)\dd x
=\int_a^b u(x)\,({}_xD_b^\beta\phi)(x)\dd x,
\end{equation*}
under the endpoint conditions stated in Appendix~\ref{app:adjoints}; hence $({}_aD_x^\beta)^\ast={}_xD_b^\beta$ on that domain. For the periodic Riesz operator, $\Rop_\beta^\ast=\Rop_\beta$ because its multiplier $-|\kappa|^\beta$ is real and even. For the directional periodic operator,
\begin{equation*}
\mathcal F[\Xop_\beta^\ast\phi_k](\kappa)
=\overline{(\mathrm i\kappa)^\beta}\,\mathcal F[\phi_k](\kappa)
=(\mathrm i(-\kappa))^\beta\,\mathcal F[\phi_k](\kappa).
\end{equation*}
Using an adjoint from a different operator family changes the model being identified and can bias the recovered order. Each benchmark therefore uses the adjoint of its declared candidate operator.

\paragraph{Nonlinear terms.} For the nonlinear term $u^{p}\Xop_\beta u$ in~\eqref{eq:model},
\begin{equation}
\inner{u^{p}\Xop_\beta u}{\phi_k}
=\inner{\Xop_\beta u}{u^{p}\phi_k}
=\inner{u}{\Xop_\beta^\ast(u^{p}\phi_k)}.
\label{eq:nonlinear-weak}
\end{equation}
Eq.~\eqref{eq:nonlinear-weak} is the consistent weak form of the nonlinear candidate. At the discrete level, let $A$ represent $\Xop_\beta$. Eq.~\eqref{eq:discrete-adjoint} gives the exact chain
\begin{equation*}
\inner{u}{A^{\ast,h}(u^p\phi_k)}_h
=\inner{Au}{u^p\phi_k}_h
=\inner{u^pAu}{\phi_k}_h.
\end{equation*}
Thus the discrete weak nonlinear feature on the left is exactly the test-function projection of the corresponding pointwise strong feature $u^pAu$. For nonlinear terms, differentiation still acts on the measured field, while integration provides averaging. The resulting feature may therefore be biased, and the strongest noise guarantee applies to linear terms.

\subsection{Numerical computation of adjoint identities}\label{subsec:discrete}

The numerical library uses the discrete adjoint of the operator that defines each candidate. Let $A\in\R^{n\times n}$ be the real-grid matrix representing a one-dimensional discrete operator, let $W=\operatorname{diag}(w_0,\ldots,w_{n-1})\in\R^{n\times n}$ be the positive quadrature-weight matrix, and let $f,\phi\in\R^n$, with $\inner{f}{g}_h=f^{\!\top}Wg$. The discrete adjoint is
\begin{equation}
A^{\ast,h}=W^{-1}A^{\!\top}W,\qquad \inner{Af}{\phi}_h=\inner{f}{A^{\ast,h}\phi}_h.
\label{eq:discrete-adjoint}
\end{equation}
When $A=A_{\beta,h}$ discretises $\Xop_\beta$, we write $\Xop_\beta^{\ast,h}\phi:=A_{\beta,h}^{\ast,h}\phi$. For the uniform quadrature used here, $W$ is a scalar multiple of the identity and $A^{\ast,h}=A^\top$. A left Gr\"unwald--Letnikov derivative therefore contributes $(G_L^\gamma)^\top\phi$; periodic operators use the conjugate Fourier multiplier; and the Caputo target uses the transpose of its L1 matrix. Because the Caputo family changes definition at integer orders, the subunit branch, exact integer operator, and superunit branch are precomputed and searched separately. Interpolation and local polishing remain within the selected branch. The exact $\alpha=1$ candidate is compared directly with the fractional-branch minima, rather than approximated by a nearby fractional order. The superunit implementation is included and verified numerically. Appendix~\ref{app:discrete} verifies the adjoints, branch separation, and discretisation accuracy.

\subsection{The noise-robust weak library}\label{subsec:noise}

Let $u^\star$ denote the noise-free field and let the measured field be $u=u^\star+\eta$, where $\eta$ is zero-mean measurement noise. For a linear candidate ($p=0$), define the noise-free weak feature by $\theta^\star_{k,\beta}:=\inner{u^\star}{\Xop_\beta^\ast\phi_k}$ and the measured feature by $\theta_{k,\beta}:=\inner{u}{\Xop_\beta^\ast\phi_k}$. Their difference is
\begin{equation}
\theta_{k,\beta}-\theta^\star_{k,\beta}=\inner{\eta}{\Xop_\beta^\ast\phi_k},
\label{eq:weak-perturbation}
\end{equation}
an integral projection of the noise, whereas the strong form applies the fractional operator to the noise pointwise. Here \emph{pointwise} means that $\Xop_\beta u$ is evaluated at individual grid nodes before regression, rather than first being integrated or projected against a test function. For the periodic spatial operators used below, write the Fourier action as
\begin{equation*}
\mathcal F_x[\Xop_\beta f](\kappa)=s_\beta(\kappa)\,\mathcal F_x[f](\kappa),\qquad
s_\beta(\kappa)=
\begin{cases}
-|\kappa|^\beta, & \Xop_\beta=\Rop_\beta,\\
(\mathrm i\kappa)^\beta, & \Xop_\beta=D_x^\beta,
\end{cases}
\end{equation*}
hence both positive-order multipliers satisfy $|s_\beta(\kappa)|=|\kappa|^\beta$. The weak--strong contrast can therefore be stated precisely for independent grid noise.

\paragraph{Theoretical properties.} The analysis below separates three properties. First, adjoint consistency ensures that every weak column represents the declared fractional operator and its boundary convention. Second, for linear right-hand-side columns, integration averages independent measurement noise: Proposition~\ref{prop:variance} shows variance decay under grid refinement, while the corresponding unregularised strong feature becomes increasingly noisy. These results characterise individual features; estimator-level behaviour is assessed empirically. Corollary~\ref{cor:multiplicative} extends the variance result to the principal multiplicative-noise model, while Remarks~\ref{rem:nonlinear-bias} and~\ref{rem:temporal-target} describe the remaining nonlinear bias and Caputo endpoint sensitivity.

\begin{proposition}\label{prop:variance}
Let $\eta_{ij}$, $i=0,\dots,n_t-1$ and $j=0,\dots,n_x-1$, be independent, zero-mean random variables with variance $\sigma^2$ on a sequence of uniform grids over the fixed spatiotemporal domain $Q$. By \emph{grid refinement} we mean $n_t,n_x\to\infty$ on this fixed domain, with $h_t,h_x\to0$ and $h_t=\Theta(n_t^{-1})$, $h_x=\Theta(n_x^{-1})$. Define
\[
\inner{f}{g}_h=h_th_x\sum_{i=0}^{n_t-1}\sum_{j=0}^{n_x-1}f_{ij}g_{ij},
\qquad
\lVert v\rVert_h^2=h_th_x\sum_{i=0}^{n_t-1}\sum_{j=0}^{n_x-1}v_{ij}^2.
\]
Thus $\lVert\cdot\rVert_h$ is the quadrature-weighted discrete $L^2(Q)$ norm on grid samples. Let $\lVert v\rVert_{L^2(Q)}^2=\int_Q|v|^2\,\dd x\dd t$. For part (i), hold the test function $\phi_k$ fixed on $Q$ as the grid is refined and assume that it belongs to the adjoint domain, so that $\omega^h_{k,\beta}=\Xop_\beta^{\ast,h}\phi_k$ satisfies $\lVert\omega^h_{k,\beta}\rVert_h\to\lVert\Xop_\beta^\ast\phi_k\rVert_{L^2(Q)}<\infty$.
\begin{enumerate}
\item[(i)] The weak-feature perturbation~\eqref{eq:weak-perturbation} satisfies
\begin{equation*}
\operatorname{Var}\bigl(\inner{\eta}{\omega^h_{k,\beta}}_h\bigr)=\sigma^2h_th_x\lVert\omega^h_{k,\beta}\rVert_h^2=O(h_th_x).
\end{equation*}
Hence, at fixed domain size, its variance is $O((n_tn_x)^{-1})$ as both grid dimensions are refined.
\item[(ii)] For a periodic pointwise feature whose multiplier $s_\beta$ is defined above and satisfies $|s_\beta(\kappa)|=|\kappa|^\beta$,
\begin{equation*}
\operatorname{Var}\bigl((\Xop_\beta\eta)_{ij}\bigr)
=\frac{\sigma^2}{n_x}\sum_{\ell=-\lfloor n_x/2\rfloor}^{\lceil n_x/2\rceil-1}|s_\beta(\kappa_\ell)|^2
\sim\frac{\pi^{2\beta}}{2\beta+1}\sigma^2h_x^{-2\beta},
\qquad \kappa_\ell=\frac{2\pi\ell}{L_x}.
\end{equation*}
Thus the variance of every unregularised positive-order strong-form feature ($\beta>0$) diverges under spatial refinement, at a rate that increases with $\beta$; the separately defined identity candidate at $\beta=0$ retains variance $\sigma^2$. For an even grid, a real-valued directional implementation may treat the single unpaired Nyquist mode separately; this changes one summand only and leaves the asymptotic relation unchanged, as noted in the proof.
\end{enumerate}
\end{proposition}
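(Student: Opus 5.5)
Part (i) is a direct second-moment computation. Writing the perturbation as $\inner{\eta}{\omega^h_{k,\beta}}_h=h_th_x\sum_{i,j}\eta_{ij}(\omega^h_{k,\beta})_{ij}$, the coefficients $(\omega^h_{k,\beta})_{ij}$ are deterministic, because the adjoint acts only on the fixed test function. The $\eta_{ij}$ are independent with common variance $\sigma^2$, so all cross terms vanish and
\[
\operatorname{Var}\bigl(\inner{\eta}{\omega^h_{k,\beta}}_h\bigr)=(h_th_x)^2\sigma^2\sum_{i,j}(\omega^h_{k,\beta})_{ij}^2=\sigma^2h_th_x\lVert\omega^h_{k,\beta}\rVert_h^2 .
\]
By assumption $\lVert\omega^h_{k,\beta}\rVert_h\to\lVert\Xop_\beta^\ast\phi_k\rVert_{L^2(Q)}<\infty$, so the norm is eventually bounded and the variance is $O(h_th_x)$. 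Since $h_t=\Theta(n_t^{-1})$ and $h_x=\Theta(n_x^{-1})$, this is $O((n_tn_x)^{-1})$. I would also note that the identity does not need a uniform quadrature beyond the form given in the statement.

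For part (ii), fix the time index $i$ and write the periodic discrete operator as $F^{-1}SF$ acting on the row $\eta_{i\cdot}$. Here $F$ is the unitary-normalised DFT scaled appropriately and $S=\operatorname{diag}(s_\beta(\kappa_\ell))$. Concretely, $(\Xop_\beta\eta)_{ij}=\sum_{m}c_{j-m}\eta_{im}$, where the convolution kernel is $c_r=n_x^{-1}\sum_\ell s_\beta(\kappa_\ell)e^{2\pi\mathrm i\ell r/n_x}$. By independence, the variance equals $\sigma^2\sum_r|c_r|^2$. By Parseval for the DFT, $\sum_r|c_r|^2=n_x^{-1}\sum_\ell|s_\beta(\kappa_\ell)|^2$, which gives the stated exact formula. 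If the real implementation modifies the Nyquist mode (for example by taking the real part of $(\mathrm i\kappa)^\beta$ there), then one summand changes by at most $|\kappa_{n_x/2}|^{2\beta}$. I will check in the asymptotics that this is lower order relative to the full sum.

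The asymptotic step is a Riemann-sum argument. Set $L_x=n_xh_x$, so that $\kappa_\ell=2\pi\ell/(n_xh_x)$ and $|s_\beta(\kappa_\ell)|^2=(2\pi/L_x)^{2\beta}|\ell|^{2\beta}$. Then
\[
\frac{1}{n_x}\sum_\ell|\ell|^{2\beta}\sim\frac{1}{n_x}\cdot2\int_0^{n_x/2}y^{2\beta}\dd y=\frac{2}{2\beta+1}\Bigl(\frac{n_x}{2}\Bigr)^{2\beta},
\]
using $\sum_{\ell=1}^{N}\ell^{2\beta}\sim N^{2\beta+1}/(2\beta+1)$ for $\beta>0$. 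Combining these,
\[
\Bigl(\frac{2\pi}{n_xh_x}\Bigr)^{2\beta}\frac{2^{1-2\beta}n_x^{2\beta}}{2\beta+1}=\frac{2\pi^{2\beta}}{2\beta+1}h_x^{-2\beta}.
\]
This carries a factor of $2$ that the stated constant does not. So the main obstacle is pinning down the normalisation convention: whether the stated sum counts both signs of $\ell$, and how $\kappa_\ell$ relates to $h_x$. I would first recheck whether the intended range effectively gives a one-sided contribution. If it does not, I would track the constant exactly from the DFT convention actually used in the paper's Fourier implementation and adjust the constant if necessary. The qualitative conclusion does not depend on this constant: the variance diverges like $h_x^{-2\beta}$, at a rate increasing in $\beta$. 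The Nyquist term contributes $O(n_x^{-1}h_x^{-2\beta})$, which is negligible. At $\beta=0$ the operator is the identity, and the variance is $\sigma^2$ directly.
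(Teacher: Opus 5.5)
Your part (i) is the paper's argument. Your exact variance formula in (ii) is also correct. Computing $\sigma^2\sum_r|c_r|^2$ for the convolution kernel and applying discrete Parseval is equivalent to the paper's route via $\mathbb E[\widehat\eta_\ell\overline{\widehat\eta_m}]=n_x\sigma^2\delta_{\ell m}$. Your Nyquist estimate $O(n_x^{-1}h_x^{-2\beta})=O(h_x^{-(2\beta-1)})$ agrees with the paper's.

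\textbf{The gap is the asymptotic constant.} The factor of $2$ you found is not a normalisation-convention issue; it is an algebra slip in your own display. From
\[
\frac{1}{n_x}\sum_\ell|\ell|^{2\beta}\sim\frac{2}{n_x}\cdot\frac{(n_x/2)^{2\beta+1}}{2\beta+1},
\]
note that $(n_x/2)^{2\beta+1}/n_x=\tfrac12(n_x/2)^{2\beta}$. The right-hand side is therefore $\frac{1}{2\beta+1}(n_x/2)^{2\beta}$, not $\frac{2}{2\beta+1}(n_x/2)^{2\beta}$. Multiplying by $(2\pi/(n_xh_x))^{2\beta}$ then gives exactly $\frac{\pi^{2\beta}}{2\beta+1}h_x^{-2\beta}$.

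The stated range does count both signs of $\ell$, with each half contributing $\frac{1}{2(2\beta+1)}(n_x/2)^{2\beta}$. Also $\kappa_\ell=2\pi\ell/L_x$ with $L_x=n_xh_x$ is the right relation. So there is nothing to ``adjust'': the stated constant is correct. As written, your proposal proves only the rate and would contradict the $\sim$ relation in the statement.

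The paper avoids this bookkeeping by scaling first, $h_x\kappa_\ell=2\pi\ell/n_x$, so that
\[
\frac{h_x^{2\beta}}{n_x}\sum_\ell|\kappa_\ell|^{2\beta}=\frac1{n_x}\sum_\ell\Bigl|\frac{2\pi\ell}{n_x}\Bigr|^{2\beta}\longrightarrow\int_{-1/2}^{1/2}|2\pi r|^{2\beta}\dd r=\frac{\pi^{2\beta}}{2\beta+1}.
\]
This is a single Riemann sum on $[-1/2,1/2]$. With that correction your proof is complete and follows essentially the paper's route.
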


\begin{proof}
For (i), independence removes all cross-covariances:
\[
\operatorname{Var}\!\left(h_th_x\sum_{i=0}^{n_t-1}\sum_{j=0}^{n_x-1}\eta_{ij}(\omega^h_{k,\beta})_{ij}\right)
=\sigma^2(h_th_x)^2\sum_{i=0}^{n_t-1}\sum_{j=0}^{n_x-1}(\omega^h_{k,\beta})_{ij}^2
=\sigma^2h_th_x\lVert\omega^h_{k,\beta}\rVert_h^2.
\]
The assumed norm convergence makes the final factor bounded. It holds for the fixed smooth Fourier and periodised-Gaussian tests used with the periodic operators; for a one-sided operator, it holds whenever the fixed test function belongs to the corresponding adjoint domain.

For (ii), fix a time index and write $\eta_j$ for the spatial noise samples on that time slice. Let
\[
\mathcal L_{n_x}=\{-\lfloor n_x/2\rfloor,\ldots,\lceil n_x/2\rceil-1\},
\qquad \kappa_\ell=\frac{2\pi\ell}{L_x}.
\]
With the discrete Fourier transform
\[
\widehat\eta_\ell=\mathcal F_x[\eta](\kappa_\ell)
=\sum_{q=0}^{n_x-1}\eta_q e^{-\mathrm i\kappa_\ell x_q},
\]
the multiplier definition gives the inverse representation
\begin{equation*}
(\Xop_\beta\eta)(x_j)
=\frac{1}{n_x}\sum_{\ell\in\mathcal L_{n_x}}
 s_\beta(\kappa_\ell)\widehat\eta_\ell e^{\mathrm i\kappa_\ell x_j}.
\end{equation*}
Here $\delta_{qr}$ denotes the Kronecker delta. Since $\mathbb E[\eta_q\eta_r]=\sigma^2\delta_{qr}$, discrete Fourier orthogonality yields
\begin{align*}
\mathbb E[\widehat\eta_\ell\overline{\widehat\eta_m}]
&=\sum_{q=0}^{n_x-1}\sum_{r=0}^{n_x-1}
  \mathbb E[\eta_q\eta_r]
  e^{-\mathrm i\kappa_\ell x_q}e^{\mathrm i\kappa_m x_r}\\
&=\sigma^2\sum_{q=0}^{n_x-1}e^{-\mathrm i(\kappa_\ell-\kappa_m)x_q}
=n_x\sigma^2\delta_{\ell m}.
\end{align*}
The output is real for the conjugate-symmetric multipliers considered here. Its mean is zero, and expanding the squared magnitude therefore gives
\begin{align*}
\operatorname{Var}\bigl((\Xop_\beta\eta)(x_j)\bigr)
&=\frac{1}{n_x^2}
\sum_{\ell,m\in\mathcal L_{n_x}}
 s_\beta(\kappa_\ell)\overline{s_\beta(\kappa_m)}
 e^{\mathrm i(\kappa_\ell-\kappa_m)x_j}
 \mathbb E[\widehat\eta_\ell\overline{\widehat\eta_m}]\\
&=\frac{\sigma^2}{n_x}
\sum_{\ell=-\lfloor n_x/2\rfloor}^{\lceil n_x/2\rceil-1}
 |s_\beta(\kappa_\ell)|^2
=\frac{\sigma^2}{n_x}
\sum_{\ell=-\lfloor n_x/2\rfloor}^{\lceil n_x/2\rceil-1}
 |\kappa_\ell|^{2\beta}.
\end{align*}
Because $h_x=L_x/n_x$, the last sum obeys
\begin{align*}
\frac{h_x^{2\beta}}{n_x}
\sum_{\ell=-\lfloor n_x/2\rfloor}^{\lceil n_x/2\rceil-1}|\kappa_\ell|^{2\beta}
&=\frac1{n_x}
\sum_{\ell=-\lfloor n_x/2\rfloor}^{\lceil n_x/2\rceil-1}
\left|\frac{2\pi\ell}{n_x}\right|^{2\beta}\\
&\longrightarrow\int_{-1/2}^{1/2}|2\pi r|^{2\beta}\,\dd r
=\frac{\pi^{2\beta}}{2\beta+1}.
\end{align*}
Consequently,
\[
\operatorname{Var}\bigl((\Xop_\beta\eta)(x_j)\bigr)
\sim\frac{\pi^{2\beta}}{2\beta+1}\sigma^2h_x^{-2\beta},
\]
which is exactly the asymptotic relation stated in part~(ii). For the even-grid directional implementation, the real-valued projection affects only the unpaired Nyquist mode. Its contribution to the variance is $O(h_x^{-(2\beta-1)})$, one order lower than the $O(h_x^{-2\beta})$ total, and therefore does not change the leading constant or rate.
\end{proof}

Proposition~\ref{prop:variance} gives a feature-level explanation of why denser observations benefit the linear weak formulation more than the unregularised strong form under its assumptions. With the implementation's separate discrete $\ell^2$ row normalisation, the variance is $O(h_t^2h_x^2)$ for fixed separable tests, while signal and noise are rescaled identically. On a fixed domain, each linear weak measurement averages more independent noise samples and becomes more stable, whereas pointwise fractional differentiation admits progressively higher wavenumbers and amplifies their noise. Nonetheless, correlated noise, the Caputo initial-data terms, nonlinear features, and conditioning of the regression can alter the final estimator. We therefore assess coefficient and structure recovery empirically in Section~\ref{sec:exp}, and Appendix~\ref{app:discrete} verifies the predicted rates numerically.

\begin{corollary}[Multiplicative measurement noise]\label{cor:multiplicative}
Let the measured field be $u_{ij}=u^\star_{ij}(1+\rho\zeta_{ij})$, where the $\zeta_{ij}$ are independent, have zero mean and variance $\sigma_\zeta^2$, and $u^\star$ is regarded as fixed. For the same fixed test function on $Q$ and discrete weak-feature weights $\omega^h_{k,\beta}$ as in Proposition~\ref{prop:variance}(i), the conditional perturbation variance is
\begin{equation*}
\begin{aligned}
\operatorname{Var}\!\left(\inner{u-u^\star}{\omega^h_{k,\beta}}_h\,\middle|\,u^\star\right)
&=\rho^2\sigma_\zeta^2(h_th_x)^2
  \sum_{i=0}^{n_t-1}\sum_{j=0}^{n_x-1}\bigl(u^\star_{ij}(\omega^h_{k,\beta})_{ij}\bigr)^2 \\
&\le \rho^2\sigma_\zeta^2\lVert u^\star\rVert_\infty^2
  h_th_x\lVert\omega^h_{k,\beta}\rVert_h^2
=O(h_th_x).
\end{aligned}
\end{equation*}
For the uniform perturbations $\zeta_{ij}\sim\mathcal U[-1,1]$ used in the main experiments, $\sigma_\zeta^2=1/3$.
\end{corollary}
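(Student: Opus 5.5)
The plan is to reduce the corollary to the computation in Proposition~\ref{prop:variance}(i), treating the multiplicative model as additive noise with a deterministic, spatially varying scale. Conditionally on $u^\star$, the perturbation is $u_{ij}-u^\star_{ij}=\rho\,u^\star_{ij}\zeta_{ij}=:\eta_{ij}$. These $\eta_{ij}$ are independent and have zero mean. Their variances are $\rho^2\sigma_\zeta^2(u^\star_{ij})^2$, which are heteroscedastic rather than identical. The weak perturbation is linear in $u$, since the weights $\omega^h_{k,\beta}=\Xop_\beta^{\ast,h}\phi_k$ depend only on the test function and the operator, not on the data. Hence
\[
\inner{u-u^\star}{\omega^h_{k,\beta}}_h=h_th_x\sum_{i,j}\rho\,u^\star_{ij}\zeta_{ij}(\omega^h_{k,\beta})_{ij}.
\]

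\textbf{Exact variance.} I would expand the variance of this weighted sum exactly as in the proof of part~(i). Independence of the $\zeta_{ij}$ removes all cross-covariances. Each diagonal term contributes $(h_th_x)^2\rho^2\sigma_\zeta^2\bigl(u^\star_{ij}(\omega^h_{k,\beta})_{ij}\bigr)^2$. Summing over $i,j$ gives the stated equality. The only care needed is to emphasise that conditioning on $u^\star$, which is fixed by assumption, makes the $u^\star_{ij}$ deterministic coefficients.

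\textbf{Upper bound.} For the inequality, I would bound $(u^\star_{ij})^2\le\lVert u^\star\rVert_\infty^2$ termwise. Factoring one $h_th_x$ into the definition of $\lVert\cdot\rVert_h^2$ then yields
\[
\rho^2\sigma_\zeta^2\lVert u^\star\rVert_\infty^2\,h_th_x\lVert\omega^h_{k,\beta}\rVert_h^2.
\]

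\textbf{Rate under refinement.} The final $O(h_th_x)$ claim follows from the norm-convergence hypothesis of Proposition~\ref{prop:variance}(i), which keeps $\lVert\omega^h_{k,\beta}\rVert_h$ bounded. It also needs $\lVert u^\star\rVert_\infty$ to remain bounded along the grid sequence. This is the one point that requires a small assumption. I would state that $u^\star$ is the grid sampling of a bounded field on $\overline Q$, so that the sup over grid samples is uniformly bounded by $\sup_Q|u^\star|$. I expect making this uniformity explicit to be the only mildly delicate step; everything else is routine.

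\textbf{Uniform noise.} Finally, for $\zeta\sim\mathcal U[-1,1]$ a one-line computation gives
\[
\sigma_\zeta^2=\int_{-1}^1 z^2\,\tfrac12\dd z=\tfrac13.
\]
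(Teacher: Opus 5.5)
Your proposal is correct and follows the paper's proof essentially step for step. You condition on $u^\star$, use independence of the $\zeta_{ij}$ to remove cross-covariances, bound $|u^\star_{ij}|\le\lVert u^\star\rVert_\infty$ termwise, and invoke the bounded-weight hypothesis of Proposition~\ref{prop:variance}(i) for the $O(h_th_x)$ rate. The paper relies on uniform boundedness of $\lVert u^\star\rVert_\infty$ along the refinement sequence only implicitly, so stating it as an assumption (e.g.\ $u^\star$ sampled from a bounded field on $\overline Q$) is a small but genuine improvement in rigour.
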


\begin{proof}
Condition on the noise-free field $u^\star$. The perturbation is
\[
\inner{u-u^\star}{\omega^h_{k,\beta}}_h
=\rho h_th_x\sum_{i=0}^{n_t-1}\sum_{j=0}^{n_x-1}u^\star_{ij}\zeta_{ij}(\omega^h_{k,\beta})_{ij}.
\]
Its conditional mean is zero. Independence again removes the cross-terms, giving
\[
\operatorname{Var}\!\left(\inner{u-u^\star}{\omega^h_{k,\beta}}_h\,\middle|\,u^\star\right)
=\rho^2\sigma_\zeta^2(h_th_x)^2\sum_{i=0}^{n_t-1}\sum_{j=0}^{n_x-1}(u^\star_{ij})^2(\omega^h_{k,\beta})_{ij}^2.
\]
Using $|u^\star_{ij}|\le\lVert u^\star\rVert_\infty$ and the definition of $\lVert\cdot\rVert_h$ gives, explicitly,
\begin{align*}
\operatorname{Var}\!\left(\inner{u-u^\star}{\omega^h_{k,\beta}}_h\,\middle|\,u^\star\right)
&\le \rho^2\sigma_\zeta^2\lVert u^\star\rVert_\infty^2
   (h_th_x)^2\sum_{i=0}^{n_t-1}\sum_{j=0}^{n_x-1}(\omega^h_{k,\beta})_{ij}^2\\
&=\rho^2\sigma_\zeta^2\lVert u^\star\rVert_\infty^2
   h_th_x\lVert\omega^h_{k,\beta}\rVert_h^2\\
&=O(h_th_x),
\end{align*}
where the final step uses the bounded-weight assumption from Proposition~\ref{prop:variance}(i).
\end{proof}

\begin{remark}[Bias of nonlinear weak features]\label{rem:nonlinear-bias}
For $p=1$, let $A_h=I_{n_t}\otimes A_x$ apply the discrete spatial operator $A_x$ independently at each time level, and let $D_{\phi_k}$ be the diagonal matrix formed from the vectorised test function. Under additive i.i.d.\ noise $u=u^\star+\eta$ with variance $\sigma^2$, the discrete-adjoint feature has the exact bias
\begin{equation*}
\mathbb E\!\left[\inner{u}{A^{\ast,h}(u\phi_k)}_h\right]
-\inner{u^\star}{A^{\ast,h}(u^\star\phi_k)}_h
=\sigma^2h_th_x\operatorname{tr}(A^{\ast,h}D_{\phi_k}).
\end{equation*}
Under the multiplicative model of Corollary~\ref{cor:multiplicative}, the corresponding conditional bias is
\begin{equation*}
\rho^2\sigma_\zeta^2h_th_x\operatorname{tr}\!\left(D_{u^\star}A^{\ast,h}D_{\phi_k}D_{u^\star}\right).
\end{equation*}
These formulas apply when powers are formed directly from the measured field. Some positive-valued datasets instead use $\max(u,0)^p$ to prevent a noisy observation from creating an unphysical negative base; this nonlinear clipping changes the expectation and hence the trace formula. Integration therefore provides averaging without, in general, eliminating nonlinear-feature bias. For the periodic first derivative, the diagonal of $A^{\ast,h}$ is zero and so is the leading additive-noise bias. For the periodic directional multiplier of order $\beta$, the spatial-adjoint diagonal satisfies
\begin{equation*}
(A_x^{\ast,h})_{jj}\sim \cos\!\left(\frac{\pi\beta}{2}\right)\frac{\pi^\beta}{\beta+1}\,h_x^{-\beta}.
\end{equation*}
Consequently, for a fixed test function with $\int_Q\phi_k\ne0$, the additive-noise bias has the leading behaviour
\begin{equation*}
\sigma^2h_th_x\operatorname{tr}(A^{\ast,h}D_{\phi_k})
\sim\sigma^2\cos\!\left(\frac{\pi\beta}{2}\right)\frac{\pi^\beta}{\beta+1}\,h_x^{-\beta}\int_Q\phi_k.
\end{equation*}
It is therefore negative for $1<\beta<2$ and grows as $O(h_x^{-\beta})$; if $\int_Q\phi_k=0$, this leading term vanishes. Appendix~\ref{app:discrete} verifies these statements numerically. For additive noise and unclipped quadratic features, the explicit trace formula also suggests a plug-in bias correction when the noise variance can be estimated; extending such corrections to multiplicative noise and clipped features is left for future work.
\end{remark}

\begin{remark}[Noise in the Caputo target]\label{rem:temporal-target}
Proposition~\ref{prop:variance} concerns linear right-hand-side columns, not the complete Caputo target in Eq.~\eqref{eq:caputo-weak}. Write one discrete target row as
\begin{equation*}
b_k=h_th_x\sum_{i=0}^{n_t-1}\sum_{j=0}^{n_x-1}\omega_{ij}u(t_i,x_j),
\end{equation*}
where $\omega_{ij}$ excludes the quadrature factor $h_th_x$. Here \emph{endpoint sensitivity} means that the target can depend much more strongly on noise in the first one or two time samples than on noise at a typical interior time. Two mechanisms below make this concrete. In the subunit branch, the same noisy observation $u(0,x_j)$ enters every temporal contribution through the Caputo initial-value subtraction, therefore adding more time levels does not create independent copies that can average out this noise. In the superunit branch, the transpose of the composed L1--first-difference operator places a large opposite-sign pair of weights on the first two time samples, making perturbations there comparatively influential.

\paragraph{Subunit branch.} The same noisy initial value $u(0,x_j)$ is reused in every temporal contribution. Its variance term is
\begin{equation*}
\sigma^2h_t^2h_x^2\sum_{j=0}^{n_x-1}
\left(\sum_{i=0}^{n_t-1}\omega_{ij}\right)^2.
\end{equation*}
For samples of a fixed continuum test weight, an ordinary full-field contribution has $\sum_{i,j}\omega_{ij}^2=O((h_th_x)^{-1})$, and multiplication by $h_t^2h_x^2$ gives the $O(h_th_x)$ variance in Proposition~\ref{prop:variance}. Reusing the initial sample instead gives
\begin{equation*}
\sum_{j=0}^{n_x-1}\left(\sum_{i=0}^{n_t-1}\omega_{ij}\right)^2
=O(h_t^{-2}h_x^{-1}),
\end{equation*}
and hence a variance of $O(h_x)$. The squared quadrature factors are therefore offset by the growth of the discrete sums. Spatial refinement still averages independent initial-time samples, but temporal refinement does not create new independent copies of the initial datum. This slower rate does not by itself determine the direction of an order-selection bias.

\paragraph{Superunit branch.} For $1<\alpha<2$, the reported evaluator uses $D_h^\alpha=\mathsf{L1}_{\alpha-1}D_1$. For the separable row $\phi_k(t,x)=\vartheta_{\ell_t}(t)\psi_{\ell_x}(x)$, consider unnormalised samples of the fixed continuum tests and define
\begin{equation*}
\bm w^{(\alpha)}=(D_h^\alpha)^\top\bm\vartheta_{\ell_t},
\qquad
\omega_{ij}=w_i^{(\alpha)}(\bm\psi_{\ell_x})_j,
\quad i=0,\ldots,n_t-1,\quad j=0,\ldots,n_x-1.
\end{equation*}
The factorisation also explains the endpoint weights. Set $\bm v=\mathsf{L1}_{\alpha-1}^{\top}\bm\vartheta_{\ell_t}$; then $\bm w^{(\alpha)}=D_1^\top\bm v$. For the fixed smooth temporal tests used under refinement, the L1 weights give $v_0=O(h_t^{-1})$, while the neighbouring entries $v_1$ and $v_2$ remain $O(1)$. Because the first row of $D_1$ is a forward difference and the next rows use centred differences, the first two transpose weights satisfy $w_0^{(\alpha)}=-v_0/h_t-v_1/(2h_t)$ and $w_1^{(\alpha)}=v_0/h_t-v_2/(2h_t)$. Hence $w_0^{(\alpha)},w_1^{(\alpha)}=O(h_t^{-2})$ and $w_0^{(\alpha)}=-w_1^{(\alpha)}+O(h_t^{-1})$. This opposite-sign initial pair therefore dominates $\sum_i|w_i^{(\alpha)}|^2=O(h_t^{-4})$. Moreover,
\begin{equation*}
h_x\sum_{j=0}^{n_x-1}|(\bm\psi_{\ell_x})_j|^2
\longrightarrow \lVert\psi_{\ell_x}\rVert_{L^2(\Omega)}^2,
\end{equation*}
hence $\sum_j|(\bm\psi_{\ell_x})_j|^2=O(h_x^{-1})$. For independent additive noise, the unnormalised target-row variance is therefore
\begin{equation*}
\sigma^2h_t^2h_x^2
\left(\sum_{i=0}^{n_t-1}|w_i^{(\alpha)}|^2\right)
\left(\sum_{j=0}^{n_x-1}|(\bm\psi_{\ell_x})_j|^2\right)
=O\!\left(\sigma^2h_xh_t^{-2}\right).
\end{equation*}
For the multiplicative model of Corollary~\ref{cor:multiplicative} with bounded $u^\star$, the same argument gives the upper bound $O(\rho^2\sigma_\zeta^2\lVert u^\star\rVert_\infty^2h_xh_t^{-2})$. Thus the endpoint sensitivity also applies to the noise law used in the superunit diagnostic. Under the implementation's separate discrete $\ell^2$ normalisation of the temporal and spatial test rows, the corresponding absolute variance is $O(\sigma^2h_x^2h_t^{-1})$. The clean target is rescaled by the same factors; this normalisation therefore does not remove the relative sensitivity to initial-time noise.

The practical implication is that the discrete transpose removes pointwise interior differentiation while retaining the initial-boundary dependence intrinsic to the Caputo derivative. Ordinary endpoint vanishing of $\vartheta_{\ell_t}$ leaves the initial pair coupled through $\mathsf{L1}_{\alpha-1}^\top$ and the first rows of $D_1^\top$. This behaviour is consistent with the initial-rate trace in the continuous identity. The complete target also contains the noisy full-field projection and is scored after variance normalisation in Eq.~\eqref{eq:val}.

Appendix~\ref{app:discrete} separates these effects with five fixed-active-set cases: fully clean data; noise in every data path; a noisy field with the initial slice restored to its clean value; noise only in the temporal target; and noise only in the right-hand-side library. Comparing these cases identifies which data path drives a selected-order shift.
\end{remark}

Corollary~\ref{cor:multiplicative} covers the principal experimental noise model: although multiplicative noise is heteroscedastic, the variance of a fixed linear weak feature still vanishes under refinement. Appendix~\ref{app:altnoise} shows the same weak--strong separation under additive Gaussian noise.

The weak formulation reduces noise amplification, but accurate order recovery also requires the fractional order to be distinguishable within the weak regression. The following subsection quantifies this second issue for spatial fractional orders.

\subsection{Local identifiability of spatial fractional orders}\label{subsec:local-identifiability}

On a fixed active support, we call a spatial order $\beta_j$ locally identifiable when a small change in $\beta_j$ produces a change in the weak regression that cannot be reproduced by refitting the active coefficients. This distinction matters because a weak formulation can suppress measurement noise while nearby orders still generate very similar regression columns. To isolate the order effect, fix the temporal branch, the active support, and all continuous orders except $\beta_j$. Using exact (non-interpolated) weak features, write
\begin{equation}\label{eq:local-ident-regression}
\bm b=\Tmat(\beta_j)\bm\xi+\bm r,
\qquad
\Tmat=[\bm\theta_1,\ldots,\bm\theta_c]\in\mathbb R^{K\times c}.
\end{equation}
At the reference order, $\dot{\bm\theta}_j=\partial\bm\theta_j/\partial\beta_j$ measures how the $j$th weak column changes with $\beta_j$. Let $P_{\Tmat}=\Tmat\Tmat^\dagger$ denote the orthogonal projector onto $\operatorname{col}(\Tmat)$, where $\Tmat^\dagger$ is the Moore--Penrose pseudoinverse.

\begin{proposition}[Local order sensitivity after coefficient refitting]\label{prop:local-identifiability}
Suppose $\bm\theta_j\ne0$ and $\xi_j\ne0$. For an infinitesimal perturbation $\beta_j\mapsto\beta_j+\delta\beta_j$, allow the coefficients to refit as $\bm\xi\mapsto\bm\xi+\delta\beta_j\bm v$. Then the smallest first-order change in the model mean is
\begin{equation}\label{eq:profiled-local-change}
\min_{\bm v\in\mathbb R^c}
\left\lVert\Tmat\bm v+\xi_j\dot{\bm\theta}_j\right\rVert_2
=|\xi_j|\,\left\lVert(I-P_{\Tmat})\dot{\bm\theta}_j\right\rVert_2.
\end{equation}
Hence $\beta_j$ is first-order confounded with coefficient changes when $\dot{\bm\theta}_j\in\operatorname{col}(\Tmat)$.
\end{proposition}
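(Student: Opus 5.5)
The argument is a first-order expansion of the model mean followed by an orthogonal-projection computation.

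\textbf{Step 1: linearise the model mean.} Write the mean as $\bm m(\beta_j,\bm\xi)=\Tmat(\beta_j)\bm\xi=\sum_i\xi_i\bm\theta_i$. Only the $j$th column depends on $\beta_j$, because all other orders and the support are held fixed. The perturbation $\beta_j\mapsto\beta_j+\delta\beta_j$ together with the refit $\bm\xi\mapsto\bm\xi+\delta\beta_j\bm v$ therefore gives
\[
\bm m(\beta_j+\delta\beta_j,\bm\xi+\delta\beta_j\bm v)-\bm m(\beta_j,\bm\xi)
=\delta\beta_j\bigl(\Tmat\bm v+\xi_j\dot{\bm\theta}_j\bigr)+o(\delta\beta_j).
\]
The cross term $\delta\beta_j^2\,v_j\dot{\bm\theta}_j$ is second order. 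This requires that the exact weak column $\bm\theta_j$ be differentiable in $\beta_j$, which is why the statement uses exact rather than interpolated features. The first-order change per unit $\delta\beta_j$ is then $\Tmat\bm v+\xi_j\dot{\bm\theta}_j$, and the smallest such change is the minimum on the left of \eqref{eq:profiled-local-change}.

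\textbf{Step 2: solve the minimisation.} This is a least-squares problem $\min_{\bm v}\lVert\Tmat\bm v-(-\xi_j\dot{\bm\theta}_j)\rVert_2$. By the projection theorem, the vectors $\Tmat\bm v$ range over exactly $\operatorname{col}(\Tmat)$. The closest point to $-\xi_j\dot{\bm\theta}_j$ in that subspace is $-\xi_jP_{\Tmat}\dot{\bm\theta}_j$, attained at $\bm v=-\xi_j\Tmat^\dagger\dot{\bm\theta}_j$. This uses $P_{\Tmat}=\Tmat\Tmat^\dagger$, which is valid even when $\Tmat$ is rank-deficient. The residual is $\xi_j(I-P_{\Tmat})\dot{\bm\theta}_j$, and its norm is $|\xi_j|\,\lVert(I-P_{\Tmat})\dot{\bm\theta}_j\rVert_2$, which proves the identity.

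\textbf{Step 3: the confounding criterion.} The quantity vanishes if and only if $\xi_j=0$ or $(I-P_{\Tmat})\dot{\bm\theta}_j=0$. Since $\xi_j\ne0$ by hypothesis, it vanishes exactly when $\dot{\bm\theta}_j\in\operatorname{col}(\Tmat)$. In that case the order perturbation is absorbed to first order by a coefficient refit, which is the stated confounding. The hypothesis $\bm\theta_j\ne0$ is only needed so that the $j$th term genuinely contributes a column; it plays no role in the algebra.

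\textbf{Main obstacle.} The only delicate point is justifying the linearisation in Step 1: $\beta_j\mapsto\bm\theta_j(\beta_j)$ must be differentiable at the reference order. For periodic spectral operators I would argue this by differentiating the multiplier $|\kappa|^\beta$ or $(\mathrm i\kappa)^\beta$ in $\beta$ on the finite grid, away from $\kappa=0$ where the mode is annihilated. For Gr\"unwald--Letnikov and L1 weights, which are analytic in the order, I would differentiate them termwise. I would also keep $\bm v$ independent of $\delta\beta_j$, as in the statement. Once this is in place, the rest is the standard projection argument.
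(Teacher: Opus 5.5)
Your proposal is correct and follows the paper's proof: a first-order Taylor expansion giving $\delta\beta_j(\Tmat\bm v+\xi_j\dot{\bm\theta}_j)$, followed by least squares over $\bm v$, which removes the component of $\xi_j\dot{\bm\theta}_j$ lying in $\operatorname{col}(\Tmat)$ and leaves $\xi_j(I-P_{\Tmat})\dot{\bm\theta}_j$. The paper leaves several points implicit that you make explicit: the second-order cross term $\delta\beta_j^2 v_j\dot{\bm\theta}_j$, the minimiser $\bm v=-\xi_j\Tmat^\dagger\dot{\bm\theta}_j$, the if-and-only-if form of the confounding criterion, and the differentiability assumption on $\bm\theta_j(\beta_j)$.
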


\begin{proof}
Taylor expansion gives
\[
\Tmat(\beta_j+\delta\beta_j)(\bm\xi+\delta\beta_j\bm v)-\Tmat(\beta_j)\bm\xi
=\delta\beta_j\bigl(\Tmat\bm v+\xi_j\dot{\bm\theta}_j\bigr)+o(|\delta\beta_j|).
\]
The term $\xi_j\dot{\bm\theta}_j$ is the first-order change caused by perturbing the order. Refitting the coefficients contributes $\Tmat\bm v$, which can reproduce any vector in $\operatorname{col}(\Tmat)$. Least squares therefore cancels the component of $\xi_j\dot{\bm\theta}_j$ that lies in this column space. The component that remains is $\xi_j(I-P_{\Tmat})\dot{\bm\theta}_j$; taking its norm gives Eq.~\eqref{eq:profiled-local-change}.
\end{proof}

This result motivates the dimensionless, coefficient-independent diagnostic
\begin{equation}\label{eq:weak-order-sensitivity}
S_{\beta_j}
=\frac{\left\lVert(I-P_{\Tmat})\dot{\bm\theta}_j\right\rVert_2}
       {\lVert\bm\theta_j\rVert_2}.
\end{equation}
The numerator measures the part of the column change that remains after the best first-order coefficient adjustment, and the denominator removes the scale of the column itself. Thus $S_{\beta_j}=0$ corresponds to complete first-order confounding on the fixed support, whereas a larger value indicates that the order produces a more distinct regression direction. The diagnostic is local and support-conditioned; in Section~\ref{subsec:mainresults} we compare it with fixed-support order errors under noise.

A simple Fourier calculation explains why this sensitivity depends on both the operator and the spectral content of the observed field. Consider one isolated periodic linear term before weak projection, and let
\[
E(\kappa)=\sum_{i=0}^{n_t-1}|\widehat u(t_i,\kappa)|^2
\]
be the total energy carried by spatial wavenumber $\kappa$ over the observed times. This quantity is relevant because an order can only be distinguished through wavenumbers that are actually present in the data. Both periodic operators satisfy $|s_\beta(\kappa)|=|\kappa|^\beta$, so the fraction of operator-weighted energy carried by each nonzero wavenumber is
\[
q_\beta(\kappa)=
\frac{|\kappa|^{2\beta}E(\kappa)}
     {\sum_{\nu\ne0}|\nu|^{2\beta}E(\nu)},\qquad \kappa\ne0.
\]
Applying the same coefficient-refitting argument to the Fourier multipliers gives the isolated-term sensitivity
\begin{equation}\label{eq:spectral-order-sensitivity}
S_{\beta,\mathrm{spec}}^2=
\begin{cases}
\operatorname{Var}_{q_\beta}[\log|\kappa|], & \text{Riesz},\\[2pt]
\operatorname{Var}_{q_\beta}[\log|\kappa|]+\pi^2/4, & \text{directional}.
\end{cases}
\end{equation}
Here $\operatorname{Var}_{q_\beta}[\log|\kappa|]$ measures the spread of the operator-weighted wavenumbers on a logarithmic scale. For a Riesz term, this spread is the entire source of local order sensitivity: if all spectral weight lies at one $|\kappa|$, changing $\beta$ only rescales the feature and can be absorbed exactly by the coefficient. With energy at several distinct wavenumbers, changing $\beta$ alters their relative magnitudes and becomes easier to distinguish. A directional derivative has the same magnitude scaling and also changes phase through $\exp(\mathrm i\pi\beta\operatorname{sgn}(\kappa)/2)$, which contributes the additional $\pi^2/4$ term for a real field with conjugate-symmetric spectral energy. Eq.~\eqref{eq:spectral-order-sensitivity} therefore explains the operator-level mechanism. The reported diagnostic uses $S_{\beta_j}$ in Eq.~\eqref{eq:weak-order-sensitivity} on the full weak design, so it also incorporates the test functions and the other active columns.

\section{Pareto-based subset selection}\label{sec:pareto}

The second core contribution is a search procedure that treats term type and fractional order jointly without constructing a dense fixed dictionary.

Algorithm~\ref{alg:pareto} summarises the mechanism of Weak-Pareto from building the weak feature library to the exact-order refit. This section details its search and selection stages.

\subsection{The weak regression system}\label{subsec:regression}

Stacking the $K$ weak equations~\eqref{eq:weak-residual}, one row per test function, gives the following system of equations:
\begin{equation}
\bm b(m_\alpha,\alpha)=\Tmat(\bm p,\bm\beta)\,\bm\xi+\bm r,
\qquad
b_k(m_\alpha,\alpha)=\inner{\Top_{m_\alpha,\alpha}u}{\phi_k},
\quad
\Tmat_{k,j}=\inner{u^{p_j}\Xop_{\beta_j}u}{\phi_k}.
\label{eq:regression}
\end{equation}
The target $\bm b$ is assembled from Eq.~\eqref{eq:caputo-weak}. Each design column is constructed from Eq.~\eqref{eq:linear-spatial} or~\eqref{eq:nonlinear-weak}. For fixed powers and orders, Eq.~\eqref{eq:regression} is linear in $\bm\xi$. Weak-Pareto therefore solves the coefficients directly and reserves global optimisation for the fractional orders.

\subsection{Best-subset regression}\label{subsec:bestsubset}

Section~\ref{subsec:encoding} represents a candidate as $\mathcal M=(m_\alpha,\alpha,\bm p,\bm\beta,\bm\xi)$. At a fixed support size $c$, choosing the discrete temporal branch $m_\alpha$ and power vector $\bm p$ leaves the temporal and spatial orders as continuous variables, while the coefficients are obtained by linear regression. Because the encoding does not predefine an order dictionary, discovery therefore becomes a sequence of best-subset problems indexed by $c$. Let $\mathcal A=\{0,\dots,P\}$ be the admitted powers. Its $c$-fold Cartesian product is $\mathcal A^c=\mathcal A\times\cdots\times\mathcal A$, and we retain only the nondecreasing power patterns
\[
\mathfrak P_c=\{(p_1,\dots,p_c)\in\mathcal A^c:p_1\le\cdots\le p_c\}.
\]
For example, if $\mathcal A=\{0,1,2\}$ and $c=2$, then $\mathfrak P_2=\{(0,0),(0,1),(0,2),(1,1),(1,2),(2,2)\}$. The ordering removes duplicate permutations of the powers; the associated spatial orders remain continuous and are optimised independently for the terms. For the declared temporal range $[\alpha_{\min},\alpha_{\max}]$ and separator $\epsilon_\alpha$, define
\begin{equation*}
\begin{aligned}
\mathcal I_{\mathrm{sub}}&=[\alpha_{\min},\alpha_{\max}]\cap(0,1-\epsilon_\alpha], &
\mathcal I_{\mathrm{int}}&=[\alpha_{\min},\alpha_{\max}]\cap\{1\},\\
\mathcal I_{\mathrm{sup}}&=[\alpha_{\min},\alpha_{\max}]\cap[1+\epsilon_\alpha,2).
\end{aligned}
\end{equation*}
The sets correspond directly to the temporal label in the encoding: $\mathcal I_{\mathrm{sub}}$ contains admitted fractional orders below one, $\mathcal I_{\mathrm{int}}$ contains the exact integer candidate $\alpha=1$, and $\mathcal I_{\mathrm{sup}}$ contains admitted fractional orders above one; empty sets are omitted. For each $c=1,\dots,c_{\max}$, we solve
\begin{equation}
\mathcal M_c^\star=\argmin_{\bm p\in\mathfrak P_c,\,m_\alpha,\,\alpha\in\mathcal I_{m_\alpha},\,\bm\beta} J_c(\bm p,m_\alpha,\alpha,\bm\beta),
\qquad
J_c=\log_{10}\!\bigl(\mathcal E_{\mathrm{val}}+\eps\bigr)+\Pi_{\mathrm{dup}},
\label{eq:best-subset}
\end{equation}
where $\mathcal E_{\mathrm{val}}$ is the variance-normalised held-out score in Eq.~\eqref{eq:val} below, and $\Pi_{\mathrm{dup}}$ penalises nearly duplicate terms with the same power. Each branch is optimised independently and scored by the same normalised criterion; changes in the scale of the branch-specific target therefore do not distort the comparison. Coefficients are fitted only on the training weak rows. With $S=\operatorname{diag}(\lVert\bm\theta_{1,\mathrm{tr}}\rVert_2,\dots,\lVert\bm\theta_{c,\mathrm{tr}}\rVert_2)$ and $\widetilde\Tmat_{\mathrm{tr}}=\Tmat_{\mathrm{tr}}S^{-1}$, the inner column-normalised ridge fit is
\begin{equation}
\widehat{\bm\xi}_{\mathrm{tr}}=S^{-1}\bigl(\widetilde\Tmat_{\mathrm{tr}}^{\!\top}\widetilde\Tmat_{\mathrm{tr}}+\lambda I\bigr)^{-1}\widetilde\Tmat_{\mathrm{tr}}^{\!\top}\bm b_{\mathrm{tr}}.
\label{eq:ridge}
\end{equation}
Appendix~\ref{subsec:objective} gives the row split, normalisations, duplicate penalty, and numerical constants. Differential evolution optimises the continuous orders within each temporal branch \cite{storn1997de}; in the integer branch, $\alpha=1$ is fixed. Powers are enumerated only as nondecreasing patterns $\mathfrak P_c$, eliminating permutation duplicates. This bi-level design confines the global search to a small order space while solving the linear coefficients at every objective evaluation.

\subsection{The parsimony-promoting elbow}\label{subsec:sweep}

Solving~\eqref{eq:best-subset} for increasing $c$ produces a sequence of best models $\mathcal M_1^\star,\mathcal M_2^\star,\dots$ and a Pareto front of validation error against support size. Model quality is scored on held-out evaluation rows,
\begin{equation}
\mathcal E_{\mathrm{val}}(\mathcal M)=\frac{n_{\mathrm{val}}^{-1}\lVert \bm b_{\mathrm{val}}-\Tmat_{\mathrm{val}}\widehat{\bm\xi}\rVert_2^2}{\operatorname{Var}(\bm b_{\mathrm{val}})+\eps},
\label{eq:val}
\end{equation}
where $\operatorname{Var}(\bm b_{\mathrm{val}})=n_{\mathrm{val}}^{-1}\sum_{i=1}^{n_{\mathrm{val}}}(b_i-\overline b_{\mathrm{val}})^2$ is the empirical validation-target variance used by the implementation, and $\eps=10^{-14}$ guards against a zero or numerically negligible denominator. Variance normalisation makes the score dimensionless and comparable across temporal targets whose scale changes with $\alpha$. Training and validation use disjoint weak rows, although overlapping test-function supports make them correlated measurements of the same noisy field. Thus $K$ counts weak equations and is distinct from an effective number of independent observations. We use $\mathcal E_{\mathrm{val}}$ for internal model selection and multi-seed experiments for across-realisation assessment.

Let $\mathcal E_c^\star$ be the best validation error at support size $c$. The sweep stops when the relative improvement falls below $\delta$ after the earliest admissible stopping size $c_{\min}=2$, or when the selected elbow remains unchanged after an additional size. To define the elbow, the implementation uses the benefit coordinate $y_c=-\log_{10}(\mathcal E_c^\star+\eps)$ together with support size $c$. After min--max normalising both coordinates, the selected interior point maximises its signed vertical excess above the chord joining the first and last points. This chord-based criterion is related to normalised difference-curve knee detection \cite{satopaa2011kneedle}. Thus ``above'' refers to the normalised benefit--complexity coordinates; when the same front is drawn as raw validation error versus support size, the corresponding knee appears below the endpoint chord. If no interior point has positive signed excess, the smallest model is retained. With only two sizes, the larger model is selected only when it improves $\log_{10}\mathcal E_c^\star$ by at least $0.15$. The two-point margin is heuristic; Appendix~\ref{subsec:elbow-margin-sensitivity} examines sensitivity over $0.10$--$0.20$.

\begin{algorithm}[t]
\caption{Weak FPDE discovery via Pareto-based subset selection}\label{alg:pareto}
\begin{algorithmic}[1]
\Require Data $u$ on a grid; test functions $\{\phi_k\}$; powers $\mathcal A=\{0,\dots,P\}$, which define $\mathfrak P_c$; max size $c_{\max}$; earliest stopping size $c_{\min}=2$; plateau tolerance $\delta$
\State Build the weak target $\bm b(m_\alpha,\alpha)$ and weak columns $\Tmat_{k,j}$ via the adjoints \eqref{eq:caputo-weak}--\eqref{eq:nonlinear-weak}
\State Split rows into training/validation
\For{$c=1,2,\dots,c_{\max}$}
  \For{each $\bm p\in\mathfrak P_c$ generated from $\mathcal A$}
     \For{each admitted temporal mode $m_\alpha$}
       \State Optimise $\alpha$ only within $\mathcal I_{m_\alpha}$ (fixed at $1$ for $m_\alpha=\mathrm{int}$) and optimise $\bm\beta$, refitting $\widehat{\bm\xi}$ by \eqref{eq:ridge}
     \EndFor
     \State Retain the lowest-objective temporal mode for this power pattern
  \EndFor
  \State $\mathcal M_c^\star\gets$ best model of size $c$ by the objective $J_c$ in \eqref{eq:best-subset}; let $\mathcal E^\star_c=\mathcal E_{\mathrm{val}}(\mathcal M_c^\star)$
  \State Recompute the current signed-chord elbow $\widehat c_{\mathrm{elbow}}$ from $\{\mathcal M_j^\star\}_{j=1}^{c}$
  \If{$c\ge c_{\min}$ and the validation-improvement plateau condition holds}
     \State \textbf{break}
  \ElsIf{$\widehat c_{\mathrm{elbow}}<c$ and the elbow is unchanged for the declared patience}
     \State \textbf{break} \Comment{selection stability}
  \EndIf
\EndFor
\State Select the elbow model $\mathcal M^\star$ of the Pareto front $\{\mathcal M_c^\star\}$
\State Prune numerically inactive terms (Section~\ref{subsec:prune}) and refit at exact orders (Section~\ref{subsec:exact})
\State \Return $\mathcal M^\star$ \Comment{representing the discovered fractional equation}
\end{algorithmic}
\end{algorithm}

\subsection{Inactive-term pruning}\label{subsec:prune}

A selected model can occasionally contain a term with negligible fitted effect. We remove such terms using a scale-aware, non-oracle rule. For $\Tmat\widehat{\bm\xi}=\sum_j\widehat\xi_j\bm\theta_j$, define $r_j=\lVert\widehat\xi_j\bm\theta_j\rVert_2/(\lVert\Tmat\widehat{\bm\xi}\rVert_2+\eps)$. Term $j$ is removed when $r_j\le\tau_{\mathrm{contrib}}$ or $|\widehat\xi_j|\le\tau_{\mathrm{abs}}$. The ratio is evaluated over all finite weak rows. Unlike raw coefficient magnitude, it accounts for the different scales of fractional-library columns. Because partially cancelling terms can make $r_j>1$, only small values are interpreted.

\subsection{Exact-order refitting}\label{subsec:exact}

During the global search, weak columns are interpolated from features precomputed on an order grid. Here ``exact-order'' means that the operators are subsequently evaluated directly at continuous order values instead of by interpolation. After elbow selection and pruning, local gradient-free steps refine the selected orders within the temporal branch chosen during the global search. If a fractional branch is selected, $\alpha$ and the non-identity spatial orders are polished within their branch bounds and local trust regions. If the exact-integer branch is selected, $\alpha$ remains fixed at $1$ and only the non-identity spatial orders are polished. The temporal-branch comparison is completed during model selection, so this conditional refit operates only within the selected branch. Let $\mathcal M^\star$ denote the selected model and $(\widehat\alpha^{\,\mathcal M^\star},\widehat{\bm\beta}^{\,\mathcal M^\star})$ its polished orders. The final coefficients are then
\begin{equation}
\widehat{\bm\xi}^{\,\mathcal M^\star}
=\argmin_{\bm\xi}\;\bigl\lVert \bm b^{\mathcal M^\star}-\Tmat^{\mathcal M^\star}\bm\xi\bigr\rVert_2^2
+\lambda\bigl\lVert S^{\mathcal M^\star}\bm\xi\bigr\rVert_2^2,
\label{eq:exact-refit}
\end{equation}
where the target and design are evaluated directly at the polished orders and $S^{\mathcal M^\star}$ column-normalises the design. The same ridge parameter is used during search and refitting. Model selection is completed before this update; the training score, validation score, objective, and heuristic information criteria therefore keep their selection-stage meanings. The final all-row residual is stored separately as $\mathcal E_{\mathrm{fit}}$. Direct evaluation removes order-grid interpolation error from this conditional refit. The selected active-term set, temporal mode, and optimisation basin can still depend on the order grid, search trajectory, order bounds, test functions, and data resolution.

\subsection{Computational cost}\label{subsec:cost}

Two choices control the cost. First, the bi-level formulation restricts global optimisation to the order variables---$c+1$ dimensions for a fractional temporal branch and $c$ for the exact integer branch---while the coefficients are solved directly. Second, early stopping usually evaluates support sizes only up to the selected model plus one.

Let $N=n_tn_x$ be the number of field samples, $K=K_tK_x$ the number of weak rows, $G_\alpha$ and $G_\beta$ the numbers of temporal and spatial order nodes, and $P+1$ the number of powers. Weak-Pareto precomputes the target and candidate features at these order nodes. At support size $c$, the number of non-redundant power patterns is $N_p(c)=\binom{P+c}{c}$. A separable projection $TUX^{\!\top}$ costs $O(K_tN+Kn_x)$, and a periodic spectral adjoint at one order costs $O(K_xn_x\log n_x)$. Precomputation is therefore linear in $G_\alpha$ and $G_\beta$, with memory $O\!\left(K[G_\alpha+(P+1)G_\beta]+N\right)$.

After precomputation, one objective evaluation forms a $K\times c$ design and solves a ridge system in $O(Kc^2+c^3)$ time and $O(Kc)$ memory. If $N_{\mathrm{DE}}(c)$ is the number of differential-evolution evaluations per power pattern and the sweep stops at $c_{\mathrm{stop}}$, the total search cost is
\[
O\!\left(\sum_{c=1}^{c_{\mathrm{stop}}}N_p(c)N_{\mathrm{DE}}(c)\,[Kc^2+c^3]\right)
\]
after precomputation. In the reported experiments $c\le4$; hence the cubic term is negligible and each objective evaluation is effectively linear in the number of weak rows. Fast Fourier transforms accelerate the dominant periodic-operator calculations.

\section{Experiments and results}\label{sec:exp}

The experiments address three primary questions. First, can Weak-Pareto recover parsimonious FPDEs across linear and nonlinear benchmarks? Second, are weak measurements more robust than pointwise fractional features under matched selection? Third, is continuous-order subset search more reliable than a dense fixed dictionary? Supporting studies examine the superunit temporal branch, a two-dimensional anisotropic example, a contemporary neural fractional-discovery framework, computational cost, and applicability to irregular experimental data.

\subsection{Empirical benchmarks and evaluation metrics}\label{subsec:setup}

We use four periodic main benchmarks (Table~\ref{tab:benchmarks}): FADE, two Riesz reaction--diffusion (RD) equations with integer or fractional time, and a nonlinear fractional Burgers equation. The Burgers case is the principal nonlinear benchmark because its support contains the genuine quadratic transport term $-u\,\partial_xu$; it also tests discrimination between fractional diffusion of order $1.7$ and the nearby integer second derivative. Section~\ref{subsec:superunit} reports a separate fixed-support superunit diagnostic, and Appendix~\ref{app:challenging} adds an integer-only equation and a case with two fractional spatial derivatives. Unless stated otherwise, noise is multiplicative, $u\mapsto u(1+\rho\zeta)$ with $\zeta\sim\mathcal U[-1,1]$, and tables report $100\rho$ percent. Recovery counts are over five seeds, order and coefficient errors are conditioned on support/power recovery, and $\mathcal E_{\mathrm{fit}}$ is summarised over all seeds. The $10\%$ Riesz cases are deliberately severe identifiability tests at the present resolution. The time--space reaction--diffusion generator and evaluator share the Caputo L1 discretisation; therefore, the clean experiment is a discretisation-consistency test rather than an independent forward-solver validation.

The main experiments use one common, non-oracle configuration: automatic support-size stopping, $c_{\max}=4$, candidate powers $p\in\{0,1,2\}$, and elbow selection. Appendix~\ref{app:challenging} examines sensitivity to these choices.

\paragraph{Evaluation protocol.} We separate structural recovery from parameter accuracy. \emph{Support/power recovery} requires the correct number of terms and matching integer powers after pruning. \emph{Operator-structure recovery} additionally requires the correct temporal branch, spatial operator modes, and an absolute error no greater than $\tau_q=0.15$ for every positive derivative order; identity terms must be identified exactly. A fractional time order near one is not the exact operator $\partial_t$, and a low-order Riesz term is not the reaction identity.

The tolerance is a predefined operational criterion. For the main-text synthetic benchmarks, the positive true orders range from approximately $0.8$ to $2.0$; hence $\tau_q=0.15$ corresponds to relative deviations of $7.5\%$--$18.75\%$. An absolute criterion is appropriate because fractional order is dimensionless and temporal and spatial orders are measured on the same scale. Rescoring the 60 noisy runs in the matched weak-versus-strong comparisons gives 45, 47, and 47 recoveries for Weak-Pareto at $\tau_q=0.125$, $0.15$, and $0.175$, respectively; the strong-form method gives 0/60 at all three values. Thus, the main comparison is insensitive to moderate changes around the reported tolerance. We report continuous order errors alongside the binary counts because the cut-off is an interpretation rule, not a measure of uncertainty.

Parameter accuracy is measured by $e_\alpha=|\widehat\alpha-\alpha^\star|$. To compare right-hand-side terms, we define a one-to-one matching map $\pi$ from true-term indices to selected-term indices. Processing the true terms in their stored order, $\pi(j)$ assigns term $j$ to the nearest unmatched selected order with the same power. With $\widehat{\bm\xi}_{\pi}=(\widehat\xi_{\pi(1)},\dots,\widehat\xi_{\pi(c)})$ and $\eps_\xi=10^{-12}$, we use
\begin{equation*}
 e_\beta^{\max}=\max_j|\widehat\beta_{\pi(j)}-\beta_j^\star|,\qquad
 e_\xi^{\max}=\max_j\frac{|\widehat\xi_{\pi(j)}-\xi_j^\star|}{|\xi_j^\star|+\eps_\xi},\qquad
 e_{\xi,2}=\frac{\lVert\widehat{\bm\xi}_{\pi}-\bm\xi^\star\rVert_2}{\lVert\bm\xi^\star\rVert_2+\eps_\xi}.
\end{equation*}
These errors are averaged only over runs with correct support and powers; $e_{\xi,2}$ is especially useful when a true coefficient is small. Model selection uses the held-out, variance-normalised score $\mathcal E_{\mathrm{val}}$. After selection, the model is refit on all weak rows and reported with $\mathcal E_{\mathrm{fit}}=\lVert\bm b-\Tmat\widehat{\bm\xi}\rVert_2/(\lVert\bm b\rVert_2+\eps)$. Because the weak and strong frameworks use different regression rows, $\mathcal E_{\mathrm{fit}}$ is a within-framework diagnostic; cross-method conclusions rely on recovery rates and parameter errors.

\begin{table}[t]
\caption{Main benchmark equations in the encoding of Eq.~\eqref{eq:model}; $\Rop_\beta$ is the Riesz operator and $(0,0)$ denotes the identity term $u$. The spatial operator is directional for FADE and fractional Burgers and Riesz for the two reaction--diffusion benchmarks. True terms are listed directly as triples $(p,\beta,\xi)$; horizons $(T,L_x)$ are approximately $(15,30)$, $(1,20)$, $(3,20)$, and $(12,30)$, respectively.}\label{tab:benchmarks}
\footnotesize
\setlength{\tabcolsep}{4pt}
\begin{tabular*}{\textwidth}{@{\extracolsep{\fill}}lllc}
\toprule
Benchmark & Governing equation & True terms $(p,\beta,\xi)$ & Grid $n_t{\times}n_x$ \\
\midrule
FADE  & $D_t^{0.8}u=-\partial_x u+0.5\,D_x^{1.7}u$ & $(0,1,-1),\;(0,1.7,0.5)$ & $150{\times}120$ \\
Frac.\ RD (space) & $\partial_t u=0.04\,u+0.18\,\Rop_{1.65}u$ & $(0,0,0.04),\;(0,1.65,0.18)$ & $90{\times}96$ \\
Frac.\ RD (time--space) & $D_t^{0.82}u=0.03\,u+0.12\,\Rop_{1.55}u$ & $(0,0,0.03),\;(0,1.55,0.12)$ & $80{\times}80$ \\
Frac.\ Burgers & $\partial_t u=-u\,\partial_x u+0.25\,D_x^{1.7}u$ & $(1,1,-1),\;(0,1.7,0.25)$ & $150{\times}120$ \\
\botrule
\end{tabular*}
\end{table}

\subsection{Recovery accuracy of Weak-Pareto}\label{subsec:mainresults}

Table~\ref{tab:main} reports Weak-Pareto at $10\%$ multiplicative noise, used as a common stress level across the four benchmarks. FADE and fractional Burgers are recovered in all five seeds, including the operator orders, with small spatial-order and coefficient-vector errors. The Riesz reaction--diffusion cases are more difficult. Support and powers are recovered in $3/5$ space-fractional runs and all five time--space runs, but no run satisfies the full operator criterion at $10\%$ noise. Order identification is the more persistent difficulty, although support recovery also degrades in the space-fractional case at $10\%$ noise; the worst relative coefficient error is amplified further by the small reaction coefficients. Table~\ref{tab:rdnoise} resolves these effects across lower noise levels.

\begin{table}[t]
\caption{Weak-Pareto on the four main benchmarks at $10\%$ multiplicative noise. Recovery definitions and reporting conventions follow the evaluation protocol in Section~\ref{subsec:setup}. The relative $e_\xi^{\max}$ is inflated on the reaction--diffusion benchmarks by their small reaction coefficient, for which $e_{\xi,2}$ is more informative. Temporal-bound concentration is detailed in Table~\ref{tab:rdnoise}.}\label{tab:main}
\footnotesize
\setlength{\tabcolsep}{0pt}
\begin{tabular*}{\textwidth}{@{\extracolsep\fill}lccccccc}
\toprule
Benchmark & \shortstack{Support/\\power} & \shortstack{Operator\\structure} & $e_\alpha$ & $e_\beta^{\max}$ & $e_\xi^{\max}$ & $e_{\xi,2}$ & $\mathcal E_{\mathrm{fit}}$ \\
\midrule
FADE & 5/5 & 5/5 & $0.002\pm0.002$ & $0.07\pm0.03$ & $0.12\pm0.09$ & $0.08\pm0.05$ & $0.022\pm0.001$ \\
\shortstack[l]{Frac. RD\\(space)} & 3/5 & 0/5 & $0.20\pm0.00$ & $0.37\pm0.15$ & $2.56\pm1.97$ & $0.60\pm0.43$ & $0.67\pm0.09$ \\
\shortstack[l]{Frac. RD\\(time--space)} & 5/5 & 0/5 & $0.17\pm0.01$ & $0.54\pm0.38$ & $3.00\pm2.45$ & $0.94\pm0.76$ & $0.39\pm0.04$ \\
Burgers & 5/5 & 5/5 & $0.000\pm0.000$ & $0.003\pm0.002$ & $0.005\pm0.004$ & $0.002\pm0.002$ & $0.051\pm0.007$ \\
\botrule
\end{tabular*}
\end{table}

Table~\ref{tab:rdnoise} now includes the clean reference for both Riesz reaction--diffusion benchmarks. At $0\%$ noise, both recover the complete operator in all five seeds, confirming that the subsequent failures are noise-induced. The clean time--space $e_\alpha$ standard deviation is $1.4\times10^{-6}$ before rounding, although it appears as $0.0000$ at the precision used in Table~\ref{tab:rdnoise}. Lowering the positive noise level improves the Riesz-order estimates, but complete operator recovery remains difficult because it also requires the correct temporal mode and the reaction identity. The time--space case reaches $2/5$ operator recoveries at $2\%$ noise; no positive-noise row for the space-fractional case does. Boundary attainment is clearest for the space-fractional case: all five support/power-recovered runs at $5\%$ noise and all three runs entering the conditioned $10\%$ summary have $e_\alpha=0.20\pm0.00$. Since the true order is $\alpha=1.00$ and $\alpha_{\min}=0.80$, this is the truncation distance; zero dispersion therefore indicates constraint saturation, not precision. In the time--space case, $e_\alpha$ rises from $0.07\pm0.03$ at $2\%$ noise to $0.17\pm0.01$ at $10\%$, indicating increasing concentration near the lower boundary. Order errors are conditioned on support/power recovery. Appendix~\ref{app:discrete} identifies the noisy temporal target as the dominant source of the shift in these profiles. The positive-noise rows should therefore be read as support recovery with increasingly accurate spatial order at lower noise.

\begin{table}[t]
\caption{Order identifiability of the two Riesz reaction--diffusion benchmarks versus noise. Reporting conventions follow the evaluation protocol in Section~\ref{subsec:setup}.}\label{tab:rdnoise}
\footnotesize
\setlength{\tabcolsep}{0pt}
\begin{tabular*}{\textwidth}{@{\extracolsep\fill}lccccccc}
\toprule
Benchmark & Noise (\%) & \shortstack{Support/\\power} & \shortstack{Operator\\structure} & $e_\alpha$ & $e_\beta^{\max}$ & $e_\xi^{\max}$ & $\mathcal E_{\mathrm{fit}}$ \\
\midrule
\multirow{4}{*}{\shortstack[l]{Frac. RD\\(space)}} & 0 & 5/5 & 5/5 & $0.0000\pm0.0000$ & $0.0006\pm0.0001$ & $0.0060\pm0.0001$ & $0.0042\pm0.0001$ \\
 & 2 & 5/5 & 0/5 & $0.19\pm0.01$ & $0.19\pm0.12$ & $1.58\pm1.36$ & $0.18\pm0.02$ \\
 & 5 & 5/5 & 0/5 & $0.20\pm0.00$ & $0.30\pm0.15$ & $2.03\pm1.79$ & $0.37\pm0.04$ \\
 & 10 & 3/5 & 0/5 & $0.20\pm0.00$ & $0.37\pm0.15$ & $2.56\pm1.97$ & $0.67\pm0.09$ \\
\midrule
\multirow{4}{*}{\shortstack[l]{Frac. RD\\(time--space)}} & 0 & 5/5 & 5/5 & $0.0008\pm0.0000$ & $0.0010\pm0.0001$ & $0.0077\pm0.0001$ & $0.0052\pm0.0001$ \\
 & 2 & 5/5 & 2/5 & $0.07\pm0.03$ & $0.11\pm0.11$ & $1.42\pm1.21$ & $0.10\pm0.01$ \\
 & 5 & 5/5 & 0/5 & $0.15\pm0.03$ & $0.34\pm0.21$ & $2.57\pm1.38$ & $0.22\pm0.02$ \\
 & 10 & 5/5 & 0/5 & $0.17\pm0.01$ & $0.54\pm0.38$ & $3.00\pm2.45$ & $0.39\pm0.04$ \\
\botrule
\end{tabular*}
\end{table}

\paragraph{Support-conditioned spatial-order diagnostic.}
We next test whether the local sensitivity in Eq.~\eqref{eq:weak-order-sensitivity} is consistent with the spatial-order errors above. For each main benchmark, $S_\beta$ is evaluated on the clean field and true support using the exact weak features; the temporal branch and all other orders are fixed at their true values. In a separate fixed-support profile under multiplicative noise, only the principal noninteger linear spatial order is varied, while all coefficients are refitted on the same training rows with the ridge parameter and variance-normalised validation score used by Weak-Pareto. This removes support-selection and temporal-order errors from the diagnostic. Table~\ref{tab:local-identifiability} reports the results.

\begin{table}[t]
\centering
\caption{Support-conditioned local sensitivity $S_\beta$ and fixed-support absolute error $e_{\beta,\mathrm{fix}}$ of the principal noninteger linear spatial order (five seeds). The complete $0\%$, $2\%$, $5\%$, and $10\%$ profiles and per-seed estimates are provided in Online Resource 1.}\label{tab:local-identifiability}
\footnotesize
\setlength{\tabcolsep}{3.0pt}
\begin{tabular*}{\textwidth}{@{\extracolsep\fill}llccc}
\toprule
Benchmark & Operator & $S_\beta$ & $e_{\beta,\mathrm{fix}}$ ($0\%$) & $e_{\beta,\mathrm{fix}}$ ($10\%$) \\
\midrule
FADE & directional & $0.502$ & $0.0024\pm0.0054$ & $0.0278\pm0.0173$ \\
Frac.\ RD (space) & Riesz & $0.118$ & $0.0013\pm0.0009$ & $0.2117\pm0.0984$ \\
Frac.\ RD (time--space) & Riesz & $0.096$ & $0.0016\pm0.0010$ & $0.4253\pm0.3641$ \\
Frac.\ Burgers & directional & $1.610$ & $(1.3\pm2.1){\times}10^{-5}$ & $0.0027\pm0.0016$ \\
\botrule
\end{tabular*}
\end{table}

The sensitivity ordering is the reverse of the noisy fixed-support error ordering: fractional Burgers has the largest $S_\beta$ and smallest $10\%$ error, followed by FADE, whereas the two Riesz benchmarks have much smaller sensitivities and substantially larger errors. The same qualitative ordering appears in the complete-discovery $e_\beta^{\max}$ values in Tables~\ref{tab:main} and~\ref{tab:rdnoise}. All four fixed-support profiles remain accurate without noise. The four-benchmark ordering therefore provides an explanatory consistency check: $S_\beta$ characterises local, support-conditioned susceptibility to spatial-order perturbations. Caputo endpoint sensitivity and dense-dictionary collinearity arise from separate mechanisms analysed elsewhere.

\subsection{Superunit temporal-order diagnostic}\label{subsec:superunit}

The main benchmark suite contains no true temporal order in $(1,2)$. We therefore test the superunit branch on semi-analytic periodic data satisfying
\begin{equation}
{}_0^C D_t^{1.65}u=0.12\,D_x^2u,\qquad \partial_tu(0,x)=0.
\label{eq:superunit-diagnostic}
\end{equation}
For a spatial Fourier mode with wavenumber $\kappa$, the mode amplitude evolves as $E_{1.65,1}(-0.12\kappa^2t^{1.65})$, where $E_{a,b}(z)=\sum_{m=0}^{\infty}z^m/\Gamma(am+b)$ is the two-parameter Mittag--Leffler function. This semi-analytic evolution is independent of the L1 temporal discretisation used by the discovery evaluator. To isolate temporal-branch and order recovery, the support is fixed to one linear term ($c=1$, $p_1=0$), while both methods search the temporal branch, $\alpha$, and $\beta$ and fit the coefficient. This is therefore a branch/order diagnostic. We use the same noisy realisation for the weak and strong frameworks at each of five seeds. Operator recovery requires the superunit branch together with $|\widehat\alpha-1.65|\le\tau_q$ and $|\widehat\beta-2|\le\tau_q$. Table~\ref{tab:superunit} reports the resulting branch and operator recoveries.

\begin{table}[t]
\caption{Fixed-support superunit diagnostic for Eq.~\eqref{eq:superunit-diagnostic} under multiplicative-uniform noise. Complete per-seed estimates and mean $\pm$ sample-standard-deviation errors are provided in Online Resource 1.}\label{tab:superunit}
\footnotesize
\setlength{\tabcolsep}{5pt}
\begin{tabular*}{\textwidth}{@{\extracolsep\fill}lccc}
\toprule
Method & Noise (\%) & Branch recovery & Operator recovery \\
\midrule
\multirow{3}{*}{Weak-Pareto} & 0   & 5/5 & 5/5 \\
 & 0.5 & 5/5 & 5/5 \\
 & 1.0 & 5/5 & 1/5 \\
\midrule
\multirow{3}{*}{Strong-Pareto} & 0   & 5/5 & 5/5 \\
 & 0.5 & 0/5 & 0/5 \\
 & 1.0 & 0/5 & 0/5 \\
\botrule
\end{tabular*}
\end{table}

Both methods recover the clean superunit operator in all five runs; for Weak-Pareto, the clean temporal- and spatial-order errors are below $10^{-3}$ and the mean relative coefficient error is $0.028$. At $0.5\%$ noise, Weak-Pareto retains 5/5 branch and operator recovery, with $e_\alpha=0.074\pm0.038$, $e_\beta=0.037\pm0.028$, and $e_\xi=0.073\pm0.064$, whereas the strong-form comparator selects the wrong temporal branch in every run. At $1\%$, Weak-Pareto still selects the superunit branch in all five runs, but four estimates reach the upper search bound $\alpha=1.85$, leaving only 1/5 complete operator recoveries and $e_\alpha=0.172\pm0.062$. The diagnostic therefore verifies that the branch-aware weak search extends to $\alpha>1$ under clean and mild noise. The upper-bound saturation at $1\%$ is consistent with the endpoint sensitivity analysed in Remark~\ref{rem:temporal-target}.

\subsection{Robustness relative to a strong-form library}\label{subsec:robustness}

Fig.~\ref{fig:robustness} and Table~\ref{tab:robustness} compare Weak-Pareto with a strong-form fractional library under the same best-subset framework on FADE and fractional Burgers. The methods are comparable without noise. Under multiplicative noise, however, Weak-Pareto recovers the correct support in all five seeds at every tested level up to $20\%$ on both benchmarks. The strong-form framework does not recover the correct Burgers support in any noisy run and in all but the $5\%$ FADE condition, where it recovers $4/5$ supports but has order-one coefficient error ($e_\xi^{\max}\approx1.1$). Weak-Pareto's Burgers coefficient error remains below $0.022$ throughout; on FADE it remains small through $10\%$ noise and rises only at $20\%$, when the smaller diffusion coefficient becomes difficult to estimate.

The complete-framework comparison includes Weak-Pareto's exact-order refinement. The library-only contrast in Section~\ref{subsec:ablation} instead compares Strong-Pareto with Weak-Pareto without polishing under the same selector, isolating pointwise versus weak candidate measurements. Operator recovery then changes from $0/5$ to $5/5$ at $10\%$ FADE noise, identifying the weak library as the primary source of robustness. Appendix~\ref{app:altnoise} reaches the same conclusion under additive Gaussian noise: Weak-Pareto recovers the correct support in all five runs and four of five complete operators on both benchmarks, while the strong-form framework achieves no correct-support recovery.

Appendix~\ref{app:yu} provides a complementary comparison with the adapted neural fractional-discovery framework of Yu et al.~\cite{yu2025fde} on the advection--diffusion benchmark. We treat it as a method-level comparison because the two methods use different fractional-operator realisations; the matched weak--strong experiment provides the operator-controlled ablation.

\begin{figure}[t]
\centering
\includegraphics[width=\textwidth]{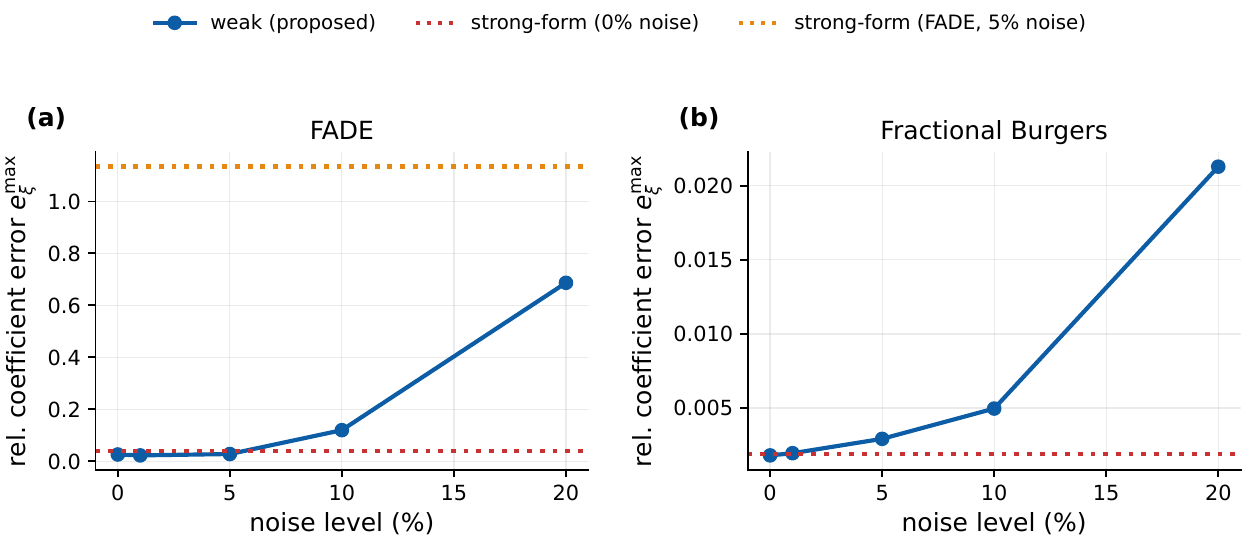}
\caption{Relative coefficient error $e_\xi^{\max}$ versus noise on (a) FADE and (b) fractional Burgers for the weak and strong-form libraries under the same selector; lower is better. The solid curve shows Weak-Pareto. Dotted horizontal lines show the strong-form result at $0\%$ noise in both panels and its additional recovered case on FADE at $5\%$ noise}\label{fig:robustness}
\end{figure}

\begin{table}[t]
\caption{Weak-Pareto versus the strong-form pointwise library under the same best-subset selector on FADE. The fit residual is a within-framework diagnostic.}\label{tab:robustness}
\begin{tabular*}{\textwidth}{@{\extracolsep\fill}lcccccc}
\toprule
& \multicolumn{3}{@{}c@{}}{Weak (proposed)} & \multicolumn{3}{@{}c@{}}{Strong-form} \\\cmidrule{2-4}\cmidrule{5-7}
Noise (\%) & Support/power & $e_\xi^{\max}$ & $\mathcal E_{\mathrm{fit}}$ & Support/power & $e_\xi^{\max}$ & $\mathcal E_{\mathrm{fit}}$ \\
\midrule
0  & 5/5 & 0.03 & $1.5{\times}10^{-3}$ & 5/5 & 0.04 & $2.7{\times}10^{-3}$ \\
1  & 5/5 & 0.02 & $2.7{\times}10^{-3}$ & 0/5 & -- & $2.7{\times}10^{-1}$ \\
5  & 5/5 & 0.03 & $1.1{\times}10^{-2}$ & 4/5 & 1.13 & $6.3{\times}10^{-1}$ \\
10 & 5/5 & 0.12 & $2.2{\times}10^{-2}$ & 0/5 & -- & $8.4{\times}10^{-1}$ \\
20 & 5/5 & 0.69 & $4.3{\times}10^{-2}$ & 0/5 & -- & $8.2{\times}10^{-1}$ \\
\botrule
\end{tabular*}
\end{table}

The matched library-only ablation discussed in Section~\ref{subsec:ablation} provides the direct evidence for the weak formulation: with the selector held fixed, replacing pointwise features by weak measurements changes FADE operator recovery at $10\%$ noise from $0/5$ to $5/5$. The additive-Gaussian experiment in Appendix~\ref{app:altnoise} confirms that this advantage is not tied to the multiplicative-uniform noise model.

\subsection{Model selection via Pareto-based subset selection}\label{subsec:paretoexp}

Fig.~\ref{fig:pareto} and Table~\ref{tab:progress} illustrate the support-size search on FADE at $5\%$ noise. Validation error drops by more than an order of magnitude from one to the true two-term support and improves only marginally at three terms. The elbow therefore selects $c=2$, and selection stability stops the search after evaluating $c=3$, before the permitted maximum $c_{\max}=4$. The procedure thus expresses parsimony through an explicit, data-driven stopping rule.

\begin{figure}[t]
\centering
\includegraphics[width=0.7\textwidth]{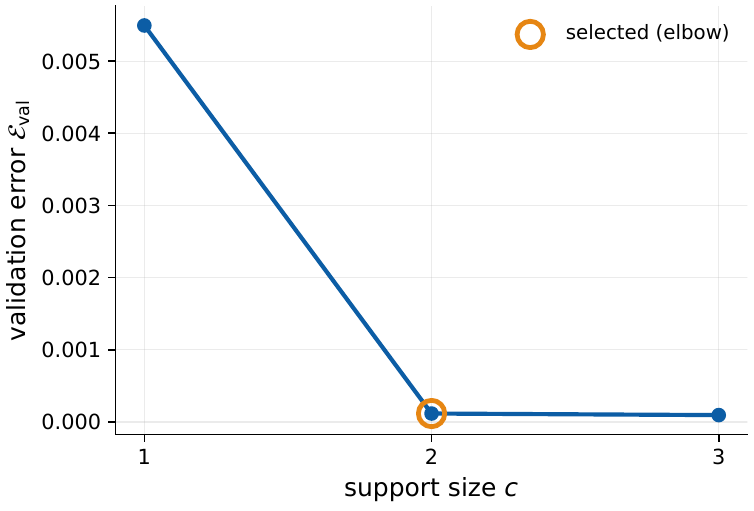}
\caption{Validation error $\mathcal{E}_{\mathrm{val}}$ versus support size on FADE ($5\%$ noise); the marked elbow ($c=2$) is the selected model. The search halts automatically after $c=3$; larger models up to $c_{\max}=4$ are therefore not explored}\label{fig:pareto}
\end{figure}

\begin{table}[t]
\caption{Support-size progress on FADE ($5\%$ noise): training and validation error of the best model at each support size. The signed elbow selects $c=2$.}\label{tab:progress}
\begin{tabular*}{\textwidth}{@{\extracolsep\fill}lccc}
\toprule
Support size $c$ & Training error & Validation error & Selected \\
\midrule
1 & $5.94{\times}10^{-3}$ & $5.49{\times}10^{-3}$ & \\
2 & $1.21{\times}10^{-4}$ & $1.16{\times}10^{-4}$ & $\checkmark$ \\
3 & $1.05{\times}10^{-4}$ & $9.40{\times}10^{-5}$ & \\
\botrule
\end{tabular*}
\end{table}

\subsection{A nonlinear fractional benchmark}\label{subsec:burgersexp}

The fractional Burgers equation $\partial_tu=-u\,\partial_xu+0.25D_x^{1.7}u$ is the principal test of nonlinear discovery because its support contains the genuine quadratic transport term $-u\,\partial_xu$. It also tests whether the framework distinguishes the fractional diffusion term $D_x^{1.7}u$ from plausible integer-order and nonlinear alternatives. We compare the true two-term structure with competing two-term structures assembled from $u_x$, $u^2u_x$, $u_{xx}$, $uD^{1.7}u$, and $D^{0.5}u$. For each structure, coefficients are refit at the exact candidate orders and scored by the relative weak residual. The residual margin is the smallest residual among the structures that do not match the ground truth, divided by the true-structure residual. Weak-Pareto selects the true pair at all six tested noise levels ($0\%$, $5\%$, $10\%$, $15\%$, $20\%$, and $25\%$). The margin decreases monotonically from $218\times$ without noise to $3.2\times$ at $25\%$ noise but remains above one, while the coefficient error stays below $0.02$. The closest competing structure replaces fractional diffusion by $u_{xx}$ at every level, showing that the weak library continues to distinguish both the noninteger diffusion order and the nonlinear transport term under substantial noise. Fig.~\ref{fig:burgers} shows the complete six-point residual curve, whereas Table~\ref{tab:burgers} reports the representative $0\%$, $10\%$, and $25\%$ rows together with coefficient errors.

\begin{table}[t]
\caption{Representative noise levels for nonlinear fractional Burgers. The residual margin is the smallest residual among the structures that do not match the ground truth, divided by the true-structure residual; $e_\xi^{\max}$ is the relative coefficient error of the true structure.}\label{tab:burgers}
\begin{tabular*}{\textwidth}{@{\extracolsep\fill}lccc}
\toprule
Noise (\%) & True structure selected & Residual margin to closest competing structure & $e_\xi^{\max}$ \\
\midrule
0  & yes & $218.3\times$ & 0.002 \\
10 & yes & $7.6\times$ & 0.007 \\
25 & yes & $3.2\times$ & 0.019 \\
\botrule
\end{tabular*}
\end{table}

\begin{figure}[t]
\centering
\includegraphics[width=\textwidth]{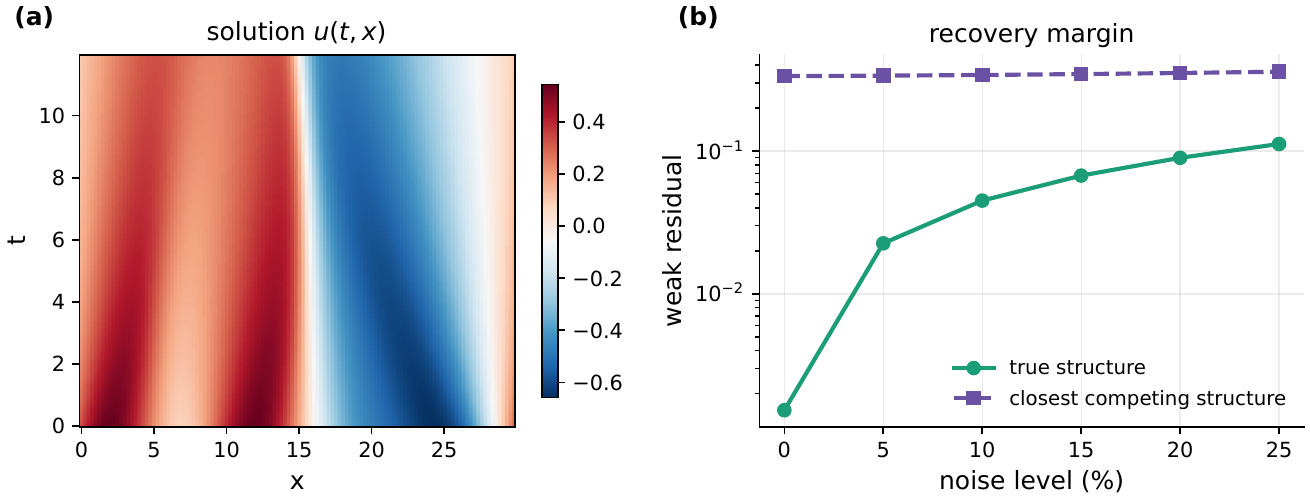}
\caption{(a) Space--time solution field of the nonlinear fractional Burgers benchmark. (b) Weak residual of the true two-term structure and the closest competing structure at $0\%$, $5\%$, $10\%$, $15\%$, $20\%$, and $25\%$ noise on a logarithmic scale; the true structure remains separated throughout}\label{fig:burgers}
\end{figure}

\subsection{Experimental frozen-soil creep}\label{subsec:frozen-soil}

We next test the weak-form principle on naturally noisy, irregularly sampled clay and silt creep measurements from Yu et al.~\cite{yu2025fde}. After averaging one repeated silt timestamp, the records contain 56 and 43 strain observations under constant loads of $1.11$ and $1.14$ MPa, respectively. We use the fractional Kelvin model
\begin{equation}
{}^{C}D_t^\alpha\epsilon(t)=\frac{\sigma_{\mathrm{load}}}{\eta_{\mathrm K}}-\frac{E}{\eta_{\mathrm K}}\epsilon(t),
\qquad 0<\alpha<1,
\label{eq:kelvin}
\end{equation}
where $\epsilon$ is strain, $E$ is the elastic modulus, $\eta_{\mathrm K}$ is the Kelvin viscosity parameter, and $\sigma_{\mathrm{load}}$ is the applied constant stress. Applying the fractional integral gives the smoothing representation
\begin{equation}
\epsilon(t)-\epsilon(0)=a_0\frac{t^\alpha}{\Gamma(\alpha+1)}+a_1 I_t^\alpha\epsilon(t),
\qquad a_0=\frac{\sigma_{\mathrm{load}}}{\eta_{\mathrm K}},\quad a_1=-\frac{E}{\eta_{\mathrm K}}.
\label{eq:kelvin-integral}
\end{equation}
For each trial order $\alpha$, the linear parameters $a_0$ and $a_1$ are fitted by least squares, and a bounded one-dimensional search selects $\alpha$. Shape-preserving interpolation is used only for quadrature; no pointwise fractional derivative is evaluated. Because the units of $\eta_{\mathrm K}$ depend on $\alpha$, Table~\ref{tab:frozen-soil} also reports the dimensionally comparable retardation time $t_{\mathrm{ret}}=(\eta_{\mathrm K}/E)^{1/\alpha}$. The silt estimate reproduces the reference order and time scale closely. For clay, the order differs by $0.098$ and $t_{\mathrm{ret}}$ by about $78\%$, indicating substantial parameter uncertainty despite a small integral residual. This section is parameter identification within the prescribed Kelvin support.

\begin{table}[t]
\centering
\caption{Integral-form identification of the fractional Kelvin model from naturally noisy frozen-soil creep records. ``Reference'' denotes the published constitutive fit. Here $E$ is in MPa, $\eta_{\mathrm K}$ has units MPa$\cdot$time$^{\alpha}$, and $t_{\mathrm{ret}}=(\eta_{\mathrm K}/E)^{1/\alpha}$ is in the time unit of the source data; therefore raw $\eta_{\mathrm K}$ values at different $\alpha$ should not be compared directly. The final column is the relative residual of Eq.~\eqref{eq:kelvin-integral}.}\label{tab:frozen-soil}
\footnotesize
\setlength{\tabcolsep}{2.0pt}
\begin{tabular*}{\textwidth}{@{\extracolsep\fill}llccccc}
\toprule
Soil & Fit & $\alpha$ & $\eta_{\mathrm K}$ & $E$ (MPa) & $t_{\mathrm{ret}}$ & Integral residual \\
\midrule
\multirow{2}{*}{Clay} & reference & 0.371 & 0.1030 & 0.320 & 0.047 & -- \\
      & weak integral & 0.469 & 0.1034 & 0.331 & 0.084 & 0.035 \\
\multirow{2}{*}{Silt} & reference & 0.562 & 0.0778 & 0.442 & 0.045 & -- \\
     & weak integral & 0.562 & 0.0744 & 0.430 & 0.044 & 0.041 \\
\botrule
\end{tabular*}
\end{table}

\subsection{Extension to two spatial dimensions}\label{subsec:twod}

The preceding discovery benchmarks use one spatial coordinate. To test whether the candidate encoding is tied to that setting, we extend each linear spatial term by a direction label $d_j\in\{x,y\}$,
\begin{equation}
{}^{C}_{0}D_t^\alpha u=\sum_{j=1}^{c}\xi_j\,\Xop_{\beta_j}^{(d_j)}u.
\label{eq:twod-model}
\end{equation}
The temporal and two spatial test bases remain separable, and the discrete adjoint is applied along the coordinate named by $d_j$. Mode-wise tensor contractions avoid assembling a dense spatial Kronecker matrix. The Pareto search, elbow rule, and exact-order refit are otherwise unchanged. The present example restricts the admitted powers to $p=0$ and focuses on directional encoding with linear terms.

We use two doubly periodic benchmarks on $[0,2\pi)^2$:
\begin{align}
{}^{C}_{0}D_t^{0.85}u
 &=0.30D_x^{1.70}u+0.20D_y^{1.40}u, \label{eq:twod-a}\\
{}^{C}_{0}D_t^{0.85}u
 &=-0.60\,\partial_xu+0.30D_x^{1.70}u+0.20D_y^{1.40}u. \label{eq:twod-b}
\end{align}
Benchmark~\eqref{eq:twod-b} requires the selector to separate two orders in the $x$ direction while assigning a third order to $y$. Each Fourier mode is propagated semi-analytically as $\widehat u(\bm\kappa,t)=\widehat u(\bm\kappa,0)E_{\alpha,1}(\lambda(\bm\kappa)t^\alpha)$, independently of the L1 evaluator used by discovery. On temporal grids with $n_t=90,179,$ and $357$, an independent L1 residual is $6.02\times10^{-3}$, $2.66\times10^{-3}$, and $1.18\times10^{-3}$, respectively, giving an observed rate $1.18$ close to the expected $2-\alpha=1.15$.

The reported grid is $90\times80\times80$. Applying the paper test-count rule independently to both spatial axes gives $30\times40\times40=48{,}000$ overlapping weak equations from the same dense data tensor. Their tensor-product refinement is computationally tractable because the example reduces repeated objective evaluations to precomputed Gram tables. We use multiplicative-uniform noise at $0\%,1\%,5\%,10\%,$ and $20\%$, $c_{\max}=4$, $\alpha\in[0.55,1.25]$, $\beta\in[0.50,2.50]$, and the same optimisation and recovery rules as Section~\ref{subsec:setup}. Boundary trimming is not used. Table~\ref{tab:twod} reports recovery across all 25 noise--seed combinations.

\begin{table}[t]
\centering
\caption{Two-dimensional directional discovery over five seeds and five noise levels ($25$ runs per benchmark). Support/direction recovery requires the correct support size and direction multiset; complete operator recovery additionally requires all orders to satisfy the protocol of Section~\ref{subsec:setup}.}\label{tab:twod}
\footnotesize
\setlength{\tabcolsep}{5pt}
\begin{tabular*}{\textwidth}{@{\extracolsep\fill}lccc}
\toprule
Benchmark & Active terms & Support/direction & Operator \\
\midrule
A: anisotropic diffusion & 2 & $25/25$ & $25/25$ \\
B: advection--diffusion & 3 & $25/25$ & $25/25$ \\
\botrule
\end{tabular*}
\end{table}

Both benchmarks retain complete direction and operator recovery throughout the noise sweep. At $20\%$ noise, $(e_\alpha,e_\beta^{\max},e_\xi^{\max})$ is $(0.00081\pm0.00054,0.00217\pm0.00140,0.0111\pm0.0023)$ for Benchmark~\eqref{eq:twod-a} and $(0.00207\pm0.00133,0.01221\pm0.00259,0.04798\pm0.00890)$ for Benchmark~\eqref{eq:twod-b}. A second spatial resolution, $90\times112\times112$, increases the number of weak rows to $94{,}080$ and again gives $25/25$ recoveries for Benchmark~\eqref{eq:twod-a}. At $20\%$ noise, $e_\xi^{\max}$ decreases from $0.0111$ to $0.0084$, while the order errors remain of comparable magnitude; we therefore use this result as a grid-stability check. Appendix~\ref{app:twod} reports the resolution and window-width diagnostics. The experiment establishes that the encoding and tensor construction extend to coordinate-dependent orders.

\FloatBarrier
\subsection{Runtime and computational cost}\label{subsec:runtime}

Table~\ref{tab:runtime} reports uncached, serial runtimes and search budgets. The main one-off cost is precomputing the order-indexed weak features; each subsequent differential-evolution evaluation solves a small ridge problem on the $K$ weak rows. On the tested benchmarks, both weak and strong frameworks run in a few seconds per seed. Measurements were obtained in CPU-only mode with Python~3.11.15 on an Apple M4 Pro system with 64~GB unified memory. They are descriptive implementation-level timings. Appendix~\ref{app:yu} uses the same CPU environment for the neural fractional-discovery framework, allowing a fair wall-clock comparison for those particular implementations.

\begin{table}[t]
\caption{Runtime and search budget on three representative benchmarks (uncached serial single-seed runs; $K$ weak rows; differential-evolution (DE) population multiplier $\times$ generations). Times are wall-clock seconds.}\label{tab:runtime}
\begin{tabular*}{\textwidth}{@{\extracolsep\fill}lcccc}
\toprule
Benchmark & Weak time (s) & Strong time (s) & Weak rows $K$ & DE budget \\
\midrule
FADE              & 7.4 & 5.0 & $\sim2640$ & $7\times24$ \\
Frac. RD (space)  & 1.8 & 3.2 & $\sim1440$ & $7\times24$ \\
Frac. Burgers     & 8.6 & 5.5 & $\sim2640$ & $7\times24$ \\
\botrule
\end{tabular*}
\end{table}

\subsection{Ablation: weak versus strong-form library, and fixed dictionaries}\label{subsec:ablation}

Table~\ref{tab:ablation} separates the three components of Weak-Pareto: weak measurements, continuous-order subset search, and exact-order polishing. The library-only contrast is decisive. With the same continuous-order selector and no polishing, Strong-Pareto achieves no complete FADE operator recovery at $10\%$ noise, whereas Weak-Pareto recovers all five. With the optimiser held fixed, the $0/5$-to-$5/5$ change isolates the weak library as the source of the robustness gain.

A fixed order dictionary is the natural alternative to continuous-order search, but it faces a resolution--conditioning trade-off. A coarse grid introduces order-discretisation error; a fine grid creates many nearly duplicate columns. Table~\ref{tab:conditioning} quantifies this effect for FADE. At $\Delta\beta=0.25$, the normalised weak dictionary already has condition number $2.6\times10^9$; at $\Delta\beta=0.10$, its mutual coherence is $0.987$ and its condition number reaches $8.4\times10^{15}$. Such a large condition number indicates severe numerical ill-conditioning, or near rank deficiency: small perturbations in the data or arithmetic can cause large changes in fitted coefficients. It is therefore a numerical indicator of regression instability. Weak-Pareto avoids this dense design by evaluating at most $c\le c_{\max}$ proposed columns at a time. The final direct evaluation removes interpolation error conditional on the selected model, while support selection can still depend on the order grid and search trajectory.

The empirical ablation confirms the theoretical distinction. Weak Grid-STRidge, which applies sequential threshold ridge regression (STRidge) on a fixed weak dictionary, does not recover the correct FADE support in any of the five runs even though its fixed grid contains nodes within $0.01$ of both true orders. Its small residual therefore reflects a coherent overcomplete dictionary that fits the weak equations without identifying the correct structure. Continuous-order Weak-Pareto recovers all five supports and operators. Disabling exact-order polishing leaves recovery unchanged and changes $e_\beta^{\max}$ only from $0.08$ to $0.07$; polishing is therefore a small refinement on this benchmark. Together, the ablations show that weak measurements provide noise robustness and continuous-order best-subset search provides reliable support selection.

\begin{table}[t]
\centering
\caption{Component ablation on FADE at $10\%$ noise, toggling the weak library, continuous-order search, and exact-order polishing.}\label{tab:ablation}
\footnotesize
\setlength{\tabcolsep}{0.4pt}
\begin{tabular*}{\textwidth}{@{\extracolsep\fill}lcccccccc}
\toprule
Method & Weak & Cont. & Polish & \shortstack{Supp./\\pow.} & \shortstack{Oper./\\struct.} & $e_\beta^{\max}$ & $e_\xi^{\max}$ & $\mathcal E_{\mathrm{fit}}$ \\
\midrule
\shortstack[l]{Strong-\\Pareto} & no & yes & no & 0/5 & 0/5 & -- & -- & $0.842\pm0.002$ \\
\shortstack[l]{Weak Grid-\\STRidge} & yes & no & no & 0/5 & 0/5 & -- & -- & $0.014\pm0.001$ \\
\shortstack[l]{Weak-Pareto\\(no polish)} & yes & yes & no & 5/5 & 5/5 & $0.08\pm0.03$ & $0.13\pm0.06$ & $0.022\pm0.001$ \\
\shortstack[l]{Full\\Weak-Pareto} & yes & yes & yes & 5/5 & 5/5 & $0.07\pm0.03$ & $0.12\pm0.09$ & $0.022\pm0.001$ \\
\botrule
\end{tabular*}
\end{table}

\begin{table}[t]
\caption{Conditioning of a fixed dense fractional dictionary as the order spacing $\Delta\beta$ shrinks: number of columns, mutual coherence, and condition number of the column-normalised weak library (noiseless FADE, orders over $[0.1,3]$).}\label{tab:conditioning}
\begin{tabular*}{\textwidth}{@{\extracolsep\fill}lccc}
\toprule
Order spacing $\Delta\beta$ & Columns & Mutual coherence & Condition number \\
\midrule
0.50 & 6  & 0.722 & $3.2{\times}10^{2}$ \\
0.25 & 12 & 0.921 & $2.6{\times}10^{9}$ \\
0.10 & 30 & 0.987 & $8.4{\times}10^{15}$ \\
0.05 & 59 & 0.997 & $1.3{\times}10^{16}$ \\
\botrule
\end{tabular*}
\end{table}

\subsection{Limitations and future work}\label{subsec:limits}

Two empirical limits are prominent. In the Riesz reaction--diffusion experiments, support and powers are often retained while nearby spatial orders and small reaction coefficients remain difficult to distinguish. Increasing the optimisation budget does not resolve this ambiguity, and Section~\ref{subsec:local-identifiability} shows that these spatial orders have substantially flatter coefficient-profiled directions than the directional FADE and Burgers terms. Their accurate clean profiles indicate sensitivity to perturbations once noise is present. In the fixed-support superunit diagnostic, the correct branch is retained at $1\%$ noise, but four of five temporal-order estimates reach the upper search bound. Remark~\ref{rem:temporal-target} links this behaviour to the endpoint-concentrated composed L1 target; alternative treatments of the initial rate are left for future work.

The theoretical guarantees are strongest for linear right-hand-side features. Nonlinear weak features average the corresponding strong features but reuse the noisy field and can therefore be biased. The support-conditioned sensitivity result is local; global identifiability of the joint discrete--continuous model class remains open. The differential-evolution search and local polishing are heuristic optimisation procedures, and rigorous global-convergence analysis is left for future work.

Several empirical directions remain open. Test-window scale is problem dependent: in the one-dimensional $K$-sweep, increasing the number of rows narrows the localised Gaussian test-function windows and eventually degrades operator recovery, while substantially broader Gaussian windows cause support under-selection in the two-dimensional diagnostic. The challenging Riesz cases also depend on the admitted powers and selection rule. The two-dimensional study establishes directional extensibility for dense, periodic, uniformly sampled data with $48{,}000$ overlapping weak rows. Sparse observations, nonlinear multidimensional discovery, and matched two-dimensional weak--strong comparisons remain future directions. Observation horizon and spectral content were not varied systematically, and overlapping test functions leave the disjoint validation rows statistically correlated.

Accordingly, we report support recovery, operator recovery, order error, coefficient error, and fit residual separately. Operator-specific care also remains essential: finite-domain and periodic fractional derivatives are different models, and each weak feature must use the adjoint of the operator it represents.

\section{Conclusion}\label{sec:conclusion}

Weak-Pareto combines two methodological advances for fractional equation discovery: an adjoint-consistent weak library and a continuous-order Pareto search. For linear right-hand-side terms, the weak formulation removes pointwise fractional differentiation from the measured field, and the variance analysis explains why this improves robustness as the grid is refined. For nonlinear terms, it provides integrated projections that reduce noise through averaging but may remain biased. The continuous-order encoding avoids the discretisation error and severe collinearity associated with fixed dictionaries, while the elbow search makes the trade-off between fit and complexity explicit. A support-conditioned local sensitivity analysis additionally quantifies when spatial-order changes can be absorbed by coefficient refitting, linking weak regression geometry to the observed hierarchy of noisy spatial-order errors.

The experiments support these advantages. On FADE and fractional Burgers, Weak-Pareto recovers the correct support in every seed at all tested multiplicative-noise levels up to $20\%$. The matched strong-form framework largely fails once noise is introduced, and the conclusion is unchanged under additive Gaussian noise. Component ablations show that the weak library drives noise robustness and that continuous-order search resolves the support-selection failure of the fixed-grid baseline on FADE. The superunit diagnostic gives 5/5 operator recovery through $0.5\%$ noise but 1/5 at $1\%$, marking an endpoint-sensitive limit. On the advection--diffusion benchmark, Weak-Pareto also yields more consistent operator recovery and lower wall-clock runtime than the adapted neural fractional-discovery framework under the same CPU environment, although the operator realisations are not identical. Together, the Riesz and superunit results show that support or branch selection can remain stable while fractional orders become weakly identifiable. The two-dimensional example further shows that the candidate tuple can absorb coordinate direction as an additional discrete label: both anisotropic benchmarks are recovered in every run through $20\%$ noise. Finally, the frozen-soil example shows that the weak-form representation can fit a fractional Kelvin model to irregular experimental data, with close agreement for silt and greater uncertainty for clay. Future work should address uncertainty-aware selection near identifiability limits, sparse and nonlinear multidimensional discovery, and application to experimental anomalous-transport systems.

\begin{appendices}

\section{Adjoint identities}\label{app:adjoints}

\paragraph{Riemann--Liouville integration by parts.} For $0<\gamma<1$ and sufficiently regular $f,\phi$, the left RL derivative~\eqref{eq:rl-def} satisfies
\begin{equation}
\int_a^b ({}_aD_z^\gamma f)(z)\,\phi(z)\dd z=\int_a^b f(z)\,({}_zD_b^\gamma\phi)(z)\dd z,
\label{eq:rl-ibp}
\end{equation}
i.e.\ a left derivative on $f$ becomes a right derivative on $\phi$. For $0<\gamma<1$ it suffices in the present application that $f$ be bounded and $\phi$ vanish at the right endpoint $b$. More generally, if $f\in L^q(a,b)$ with $q>1/(1-\gamma)$, then $({}_aI_z^{1-\gamma}f)(z)=O((z-a)^{1-\gamma-1/q})\to0$ as $z\to a^+$; bounded $f$ is a sufficient special case and applies here to $f=u-u(0,\cdot)$. With $\phi(b)=0$, one has $\frac{\dd}{\dd z}\bigl({}_zI_b^{1-\gamma}\phi\bigr)={}_zI_b^{1-\gamma}(\phi')$, from which~\eqref{eq:rl-ibp} follows by ordinary integration by parts and Fubini.

For higher orders $n-1<\gamma<n$, the analogous identity also requires the relevant left-endpoint traces of ${}_aI_z^{n-\gamma}f$ to vanish and $\phi^{(m)}(b)=0$ for $m=0,\dots,n-1$. In the present superunit Caputo application, $f=u-P_{1,a}u$ satisfies $f(a)=f'(a)=0$; for example, $u(\cdot,x)\in C^1[a,b]$ with locally H\"older-continuous $u_t$ is a sufficient regularity condition for these left traces to vanish.

If an endpoint condition fails, its continuum boundary term must be retained. The reported Gaussian tests instead use the exact discrete-adjoint construction of Section~\ref{subsec:discrete}, which preserves the implemented discrete inner-product identity without assuming vanishing Gaussian traces.

\paragraph{Caputo target.} Writing ${}_0^{C}\!D_t^\alpha u={}_0D_t^\alpha[u-P_{n-1,0}u]$ and applying the higher-order counterpart of Eq.~\eqref{eq:rl-ibp} gives Eq.~\eqref{eq:caputo-weak} for $0<\alpha<2$, $\alpha\ne1$, provided the test function satisfies the corresponding terminal conditions. For $0<\alpha<1$, the subtracted polynomial is $u(0,\cdot)$; for $1<\alpha<2$, it is $u(0,\cdot)+t\,\partial_tu(0,\cdot)$. The discrete target implements the same branch-specific correction through the transpose of the corresponding Caputo matrix.

\paragraph{Spectral operators.} For periodic fields, Parseval's identity and the (conjugate) symmetry of the multipliers in~\eqref{eq:spectral-def} give self-adjointness of the Riesz operator and the conjugate-multiplier adjoint of the directional operator stated in Section~\ref{subsec:adjoints}.

\section{Discrete operators and numerical verification}\label{app:discrete}

The discrete adjoint identity~\eqref{eq:discrete-adjoint} is verified for every operator family: matrix transposes for Gr\"unwald--Letnikov and one-sided finite-domain stencils, the transposed L1 matrix for Caputo time derivatives, and conjugate Fourier multipliers for periodic Riesz and directional operators. Random smooth-field tests satisfy $\inner{Af}{\phi}_h=\inner{f}{A^{\ast,h}\phi}_h$ to machine precision. The optional fractional-integral adjoint is verified in the same way.

We also checked the grid-refinement assumptions of Proposition~\ref{prop:variance} empirically. For $n_t=n_x=n\in\{24,32,48,64,96\}$, a fixed smooth periodic separable test function, a spectral order $\beta=1.5$, and $2000$ independent standard-Gaussian noise fields per grid, least-squares fits of log variance against $\log n$ gave slopes $-2.01$ for the weak feature and $2.97$ for the pointwise feature, close to the predicted $-2$ and $2\beta=3$. An independent deterministic calculation reproduces this check. This verification holds the test function on $Q$ fixed, exactly as assumed in the proposition; the separate $K$-sensitivity study of Appendix~\ref{app:ksens} examines what happens when the number and width of the localised Gaussian test functions are changed.

The exact adjoint check verifies the discrete transpose identities. Continuum consistency is assessed independently using analytic operator--function pairs. For $\mathcal R_\beta\sin(mx)=-|m|^\beta\sin(mx)$ with $\beta=1.7$ and $m=3$, the periodic FFT implementation has relative errors between $8\times10^{-15}$ and $2\times10^{-13}$ on grids of $32$--$256$ points. For ${}_0^CD_t^{0.7}t^3=\Gamma(4)t^{2.3}/\Gamma(3.3)$, the Caputo--L1 relative error decreases from $5.2\times10^{-3}$ to $3.6\times10^{-4}$ over $65$--$513$ points, with observed rates $1.28$--$1.29$, close to the expected $2-\alpha=1.3$. For the separately implemented superunit composition, ${}_0^CD_t^{1.3}t^3=\Gamma(4)t^{1.7}/\Gamma(2.7)$, the relative error over the non-initial rows decreases from $1.20\times10^{-2}$ to $9.99\times10^{-4}$ over the same grids, with observed rates $1.19$--$1.20$. We claim empirical convergence for this composed discretisation; a formal convergence order for the complete composition is not derived here. This verifies convergence of the active superunit code path independently of model selection. The semi-analytic fixed-support recovery experiment in Section~\ref{subsec:superunit} provides the complementary branch/order test on clean and noisy fields.

The nonlinear-bias formulas of Remark~\ref{rem:nonlinear-bias} were also checked numerically. For the periodic first derivative, the predicted leading bias is zero. The observed Monte Carlo means remain close to zero and, unlike the order-$1.7$ results, show no systematic growth with grid refinement. For a directional derivative of order $1.7$, the additive-Gaussian predicted biases at $n_x=64,128,256$ are $-8.39$, $-27.25$, and $-88.52$, while the observed means are $-8.36$, $-27.23$, and $-88.60$. Under multiplicative-uniform noise the corresponding predictions are $-1.85$, $-6.02$, and $-19.55$, versus observed means $-1.85$, $-6.03$, and $-19.58$. The successive additive-bias ratios are about $3.25$, matching the predicted grid-doubling factor $2^{1.7}$. The observed growth agrees with the predicted nonlinear-feature bias, confirming an intrinsic limitation of these data-weighted nonlinear features.

Finally, an oracle diagnostic profiles the held-out objective on the correct two-term support and operator modes while minimising over the positive Riesz order. Its five arms use clean data, fully noisy data, the noisy field with only the clean initial slice restored, a noisy target with a clean library, and a clean target with a noisy library. For the space-fractional benchmark the branch-aware clean profile selects the exact integer mode $\alpha=1$; at $2\%$ noise the fully noisy and target-only arms minimise at $0.820$ and $0.817$, whereas the library-only arm remains at the exact integer mode. Restoring only the initial slice gives $0.800$, the declared lower bound; the constrained profile therefore remains shifted and does not provide an interior minimum. For the time--space benchmark the clean and library-only minima are $0.820$, while the fully noisy and target-only minima are both $0.772$; restoring the initial slice gives $0.763$. This fixed-support diagnostic identifies the noisy temporal target as the dominant source of the observed shift in these examples. The direction of that shift is profile-specific and need not persist for other problems or scoring conventions.

\section{Experimental settings}\label{app:settings}

The one-dimensional benchmark grids are listed in Table~\ref{tab:benchmarks}; all spatial grids are periodic, and their sizes range from $80\times80$ to $150\times120$ in time--space samples. The two-dimensional example uses a $90\times80\times80$ time--space--space grid. The reported weak library uses tensor-product Gaussian test functions for most searches and Fourier spatial modes for the periodic high-order Riesz cases. The main configuration uses powers $p\in\{0,1,2\}$, $c_{\max}=4$, automatic plateau and selection-stability stopping, a relative-improvement tolerance $\delta=0.03$, and the signed elbow rule with two-point margin $0.15$. Appendix~\ref{app:ksens} examines sensitivity to the number of weak rows.

Fractional orders are represented on uniform, branch-confined grids: 47 nominal temporal nodes and 59 spatial nodes over the declared ranges in Table~\ref{tab:searchranges}. The separator $\epsilon_\alpha=10^{-3}$ prevents interpolation across $\alpha=1$, which is represented by a distinct exact-integer candidate. No noninteger true order is inserted into the grids or optimiser initialisation. The identity is also a distinct spatial candidate at $\beta=0$. After selection, direct operator evaluation at the polished orders removes interpolation error from the conditional refit, but the selected model can still depend on grid resolution and the search trajectory.

Differential evolution uses a SciPy population multiplier of $7$ and 24 generations. The ridge parameter is $\lambda=10^{-3}$ on column-normalised designs, the validation fraction is $0.25$, and pruning uses $\tau_{\mathrm{contrib}}=10^{-4}$ and $\tau_{\mathrm{abs}}=10^{-10}$, with a relative-coefficient fallback of $10^{-3}$ when library columns are unavailable. The residual guards are $10^{-14}$ for validation scoring and $10^{-12}$ for coefficient errors. Temporal boundary trimming is a control for pointwise feature construction only; it is not applied to the weak framework, whose discrete adjoint target retains the endpoint structure of the implemented operator. For the nonnegative FADE and integer advection--diffusion (ADE) fields, incorrect nonlinear candidate terms use $(u_+)^p$ to avoid amplifying noise-induced sign changes; the sign-changing Burgers field uses $u^p$. The challenging cases in Appendix~\ref{app:challenging} use the stated case-specific power set and, for the two-term Riesz case, the heuristic Akaike information criterion (AIC)-type selector.

\begin{table}[t]
\centering
\footnotesize
\setlength{\tabcolsep}{5pt}
\caption{Declared search domains per benchmark: the temporal-order interval $[\alpha_{\min},\alpha_{\max}]$, the spatial-order interval $[\beta_{\min},\beta_{\max}]$, the support-size cap $c_{\max}$, and the admitted power set $\mathcal A$. These are the differential-evolution bounds (Section~\ref{sec:pareto}); they encode coarse prior knowledge of the operator regime. No noninteger benchmark-true fractional order is inserted, whereas exact integer modes are included independently of the benchmark truth. If a temporal interval crosses one, its fractional portions and the exact integer candidate are searched separately. The identity operator is a discrete candidate at $\beta=0$; derivative orders are searched on the strictly positive part of the interval.}\label{tab:searchranges}
\begin{tabular*}{\textwidth}{@{\extracolsep{\fill}}lcccc}
\toprule
Benchmark & $[\alpha_{\min},\alpha_{\max}]$ & $[\beta_{\min},\beta_{\max}]$ & $c_{\max}$ & $\mathcal A$ \\
\midrule
FADE                    & $[0.60,1.05]$ & $[0.50,2.00]$ & 4 & $\{0,1,2\}$ \\
Frac.\ RD (space)       & $[0.80,1.15]$ & $[0.00,2.10]$ & 4 & $\{0,1,2\}$ \\
Frac.\ RD (time--space) & $[0.65,1.00]$ & $[0.00,1.90]$ & 4 & $\{0,1,2\}$ \\
Frac.\ Burgers          & $[0.85,1.15]$ & $[0.50,2.00]$ & 4 & $\{0,1,2\}$ \\
2-D directional A/B      & $[0.55,1.25]$ & $[0.50,2.50]$ & 4 & $\{0\}$ \\
Superunit diagnostic    & $[0.65,1.85]$ & $[0.50,2.50]$ & 1 & $\{0\}$ \\
ADE (challenging)       & $[0.80,1.20]$ & $[0.70,2.00]$ & 4 & $\{0\}$ \\
Two-term Riesz (chall.) & $[0.80,1.15]$ & $[0.30,3.10]$ & 4 & $\{0\}$ \\
\botrule
\end{tabular*}
\end{table}

For ADE, the classical second derivative $\beta=2$ is simply the upper integer endpoint of the search range. A supplementary reduced-sampling diagnostic retains every second temporal snapshot without interpolation or imputation and is included only as an implementation check.

\paragraph{Local spatial-order diagnostic.}
Eq.~\eqref{eq:weak-order-sensitivity} is evaluated with a centred finite difference of step $10^{-4}$ on exact weak features. The fixed-support profiles use 61 equally spaced trial orders over the benchmark's positive spatial-order search interval, augmented by the true order, followed by bounded one-dimensional refinement around the best grid point with tolerance $10^{-5}$. The true temporal branch/order, support, and all other spatial orders are fixed; coefficients, training/validation rows, ridge parameter, and validation score follow the main protocol. The reproduction script and all $0\%$, $2\%$, $5\%$, and $10\%$ per-seed profiles are archived in Online Resource 1.

\subsection{Implementation of the best-subset selection}\label{subsec:objective}
Eqs.~\eqref{eq:best-subset}--\eqref{eq:ridge} define the outer objective and inner coefficient fit. Weak rows are split deterministically into training and validation subsets, with validation fraction $0.25$. Training columns are normalised before ridge regression and the fitted coefficients are mapped back to the original scale. The principal criterion for differential evolution, Pareto dominance, elbow selection, and stopping is the variance-normalised validation score in Eq.~\eqref{eq:val}. Training error, validation error, the selection objective, and the post-refit full-data residual remain distinct fields: the objective is normally $\log_{10}$ of the normalised validation mean-squared error (MSE) plus any declared search penalty, whereas $\mathcal E_{\mathrm{fit}}$ is computed only after exact-order refitting. The duplicate-order penalty is
\begin{equation}
\Pi_{\mathrm{dup}}=\lambda_{\mathrm{dup}}\!\sum_{\substack{i<j\\p_i=p_j}}\!\left(1-\frac{|\beta_i-\beta_j|}{\delta_\beta}\right)_{+},
\qquad \lambda_{\mathrm{dup}}=0.02,\quad\delta_\beta=0.04.
\label{eq:duplicate-penalty}
\end{equation}
The branch separator is $\epsilon_\alpha=10^{-3}$. Differential evolution is run independently on each nonempty temporal mode, using the branch-specific dimensions and bounds stated in Section~\ref{subsec:bestsubset}; its best candidates are then compared by the same objective. The exact integer mode evaluates $\partial_t$ directly and never interpolates neighbouring fractional features. The selected mode is retained during inactive-term pruning, exact-order polishing, and the final full-row coefficient refit.

\section{Nonlinear fractional Burgers solver}\label{app:burgers}

The nonlinear benchmark $\partial_tu=-u\,\partial_xu+\nu D_x^\beta u$ is integrated pseudospectrally on $480$ periodic spatial points over $[0,30)$, using fourth-order Runge--Kutta with internal step $\Delta t_{\mathrm{fine}}=0.004$ up to $T=12$. The quadratic flux $-\tfrac12\partial_x(u^2)$ is dealiased by the $2/3$ rule. Retaining every fourth spatial point and $150$ endpoint-excluded temporal snapshots gives the reported $150\times120$ grid. The conjugate-symmetric multiplier keeps the field real, while $\operatorname{Re}(\mathrm i\kappa)^\beta<0$ for $1<\beta<2$ supplies dissipation. The parameters $\nu=0.25$ and $\beta=1.7$ keep the solution smooth while maintaining comparable nonlinear and diffusive contributions.

\section{Additional experiments: challenging cases and hyperparameter sensitivity}\label{app:challenging}

We report two cases that probe the limits of the method under the non-restrictive main-text settings (plateau stopping on, $c_{\max}=4$, powers $p\in\{0,1,2\}$, elbow selection). They are at opposite ends of the model class:
\begin{itemize}
\item \textbf{No fractional derivative (ADE).} The integer-order advection--diffusion equation $\partial_t u=-\partial_x u+0.25\,\partial_x^2 u$, with true terms $(0,1,-1)$ and $(0,2,0.25)$ (directional operator).
\item \textbf{More than one fractional spatial derivative (two-term Riesz).} The equation $\partial_t u=0.05\,\Rop_{0.55}u+0.005\,\Rop_{2.8}u$, with true terms $(0,0.55,0.05)$ and $(0,2.8,0.005)$.
\end{itemize}

Under the common main-text settings, both challenging cases fail structurally (Table~\ref{tab:appendix}). For ADE, the search selects the correct support size but replaces $\partial_x^2u$ with a nonlinear candidate that is absent from the true equation. For the two-term Riesz equation, the dominant high-order term is identified, but the low-order term contributes less than the noise floor at $10\%$; consequently, the elbow selects only one term. Because the support is then incomplete, order and coefficient errors are not reported for the default rows.

Case-specific restrictions improve support recovery but not complete operator identification. Limiting ADE to $p=0$ recovers the support and powers in $4/5$ seeds. For the two-term Riesz case, combining $p=0$ with a heuristic AIC-type selector recovers the two-term support in all five seeds. This selector uses $n_{\mathrm{val}}\log\mathrm{MSE}_{\mathrm{val}}+2k$ on held-out rows. Because changing $\alpha$ changes the response, this score is used only as a case-specific heuristic; the variance-normalised validation score remains the principal selector. Neither adjustment achieves complete operator recovery at $10\%$ noise, confirming sensitivity to the admitted powers, selection rule, and order identifiability.

\begin{table}[t]
\caption{Challenging cases under non-restrictive defaults versus case-specific adjustments at $10\%$ noise. Errors are conditioned on support/power recovery; $\mathcal E_{\mathrm{fit}}$ is over all seeds.}\label{tab:appendix}
\footnotesize
\setlength{\tabcolsep}{0.2pt}
\begin{tabular*}{\textwidth}{@{\extracolsep\fill}llccccc}
\toprule
Benchmark & Setting & \shortstack{Support/\\power} & \shortstack{Operator\\structure} & $e_\beta^{\max}$ & $e_\xi^{\max}$ & $\mathcal E_{\mathrm{fit}}$ \\
\midrule
\multirow{2}{*}{ADE} & default & 0/5 & 0/5 & -- & -- & $0.031\pm0.008$ \\
 & $p\in\{0\}$ & 4/5 & 0/5 & $0.17\pm0.21$ & $0.46\pm0.19$ & $0.037\pm0.004$ \\
\multirow{2}{*}{\shortstack[l]{Two-term\\Riesz}} & default & 0/5 & 0/5 & -- & -- & $0.238\pm0.014$ \\
 & $p\in\{0\}$, AIC-type & 5/5 & 0/5 & $0.21\pm0.14$ & $0.38\pm0.11$ & $0.236\pm0.013$ \\
\botrule
\end{tabular*}
\end{table}

\subsection{Two-point elbow-margin sensitivity}\label{subsec:elbow-margin-sensitivity}

The special two-point rule in Section~\ref{subsec:sweep} uses the margin $m_2=0.15$ only when the available front contains $c=1$ and $c=2$. Define
\[
\Delta_{12}=\log_{10}\mathcal E_1^\star-\log_{10}\mathcal E_2^\star
=\log_{10}(\mathcal E_1^\star/\mathcal E_2^\star),
\]
so that $c=2$ clears the two-point rule when $\Delta_{12}>m_2$. Table~\ref{tab:elbow-margin-sensitivity} applies $m_2\in\{0.10,0.15,0.20\}$ to the same paper-budget $c=1,2$ fronts at $10\%$ noise, so the comparison isolates the margin itself. The $c=2$ candidates are selection-stage fits from these restricted two-point fronts. Section~\ref{subsec:mainresults} reports the fully searched and refined models, so the corresponding parameter estimates can differ.

\begin{table}[t]
\centering
\caption{Sensitivity of the two-point elbow decision at $10\%$ multiplicative noise (five seeds). The interval gives the observed range of $\Delta_{12}$ across seeds; the final three columns count seeds in which $c=2$ clears the stated margin. The reported default is $m_2=0.15$.}\label{tab:elbow-margin-sensitivity}
\footnotesize
\setlength{\tabcolsep}{4pt}
\begin{tabular*}{\textwidth}{@{\extracolsep\fill}lcccc}
\toprule
Benchmark & Range of $\Delta_{12}$ & $m_2=0.10$ & $m_2=0.15$ & $m_2=0.20$ \\
\midrule
FADE & $0.971$--$1.124$ & $5/5$ & $5/5$ & $5/5$ \\
Frac.\ Burgers & $2.019$--$2.269$ & $5/5$ & $5/5$ & $5/5$ \\
Frac.\ RD (space) & $0.107$--$0.267$ & $5/5$ & $3/5$ & $1/5$ \\
Frac.\ RD (time--space) & $0.384$--$0.663$ & $5/5$ & $5/5$ & $5/5$ \\
\botrule
\end{tabular*}
\end{table}

The FADE, fractional Burgers, and time--space reaction--diffusion two-point decisions are unchanged throughout this neighbourhood of the default margin. The space-fractional Riesz case is selector-sensitive at $10\%$ noise: full-selector reruns give support/power recovery of $5/5$, $3/5$, and $1/5$ at margins $0.10$, $0.15$, and $0.20$, respectively, while complete operator recovery remains $0/5$ throughout. The margin therefore affects support retention in this severe case, while the spatial-order difficulty persists. Online Resource~1 archives the complete $c=1,2$ audit and the full-selector verification.

\section{Sensitivity to the number of weak rows}\label{app:ksens}

The number of weak rows $K=n_t^{\mathrm{test}}n_x^{\mathrm{test}}$ controls measurement resolution, not the candidate class. Table~\ref{tab:ksens} varies $K$ seventeen-fold on FADE at $10\%$ noise. Support and powers are recovered in all five seeds throughout, and complete operator recovery remains $5/5$ up to the main setting $K=2640$. At $K=5270$, support recovery remains perfect but operator recovery falls to $1/5$, with larger spatial-order and coefficient errors.

At large $K$, the localised Gaussian test-function windows generated by the paper's count-to-width rule become narrower in time and space; narrower localisation broadens their spectra, allowing more high-wavenumber noise to enter each weak row. Moderate $K$ therefore improves coefficient precision and reduces construction cost. The few-column continuous-order design remains well conditioned, with condition number $O(1)$, unlike the dense fixed dictionary in Table~\ref{tab:conditioning}. The optimal test-function family and bandwidth remain problem dependent; a matched comparison of Gaussian, compact-bump, and Fourier families is left to future work.

\begin{table}[t]
\caption{Sensitivity to the number of weak rows $K=n_t^{\mathrm{test}}\times n_x^{\mathrm{test}}$ on FADE at $10\%$ noise (five seeds). Reported dispersions are sample standard deviations. Under the Gaussian count-to-width rule, the number and width of localised test functions are coupled; larger $K$ therefore means narrower Gaussian test-function windows. The $K=2640$ row is the main setting.}\label{tab:ksens}
\centering
\footnotesize
\setlength{\tabcolsep}{0pt}
\begin{tabular*}{\textwidth}{@{\extracolsep\fill}lcccccccc}
\toprule
 Test grid & $K$ & \shortstack{Supp./\\pow.} & \shortstack{Oper./\\struct.} & $e_\alpha$ & $e_\beta^{\max}$ & $e_\xi^{\max}$ & $\mathcal E_{\mathrm{fit}}$ & $\operatorname{cond}_2(\widetilde\Tmat)$ \\
\midrule
$15{\times}21$ & 315  & 5/5 & 5/5 & $0.002\pm0.002$ & $0.04\pm0.03$ & $4.7{\times}10^{-2}$ & $8.6{\times}10^{-3}$ & $1.60$ \\
$22{\times}30$ & 660  & 5/5 & 5/5 & $0.002\pm0.002$ & $0.04\pm0.03$ & $5.0{\times}10^{-2}$ & $9.9{\times}10^{-3}$ & $1.60$ \\
$31{\times}43$ & 1333 & 5/5 & 5/5 & $0.002\pm0.002$ & $0.03\pm0.03$ & $5.4{\times}10^{-2}$ & $1.4{\times}10^{-2}$ & $1.59$ \\
$44{\times}60$ & 2640 & 5/5 & 5/5 & $0.002\pm0.002$ & $0.07\pm0.03$ & $1.2{\times}10^{-1}$ & $2.2{\times}10^{-2}$ & $1.59$ \\
$62{\times}85$ & 5270 & 5/5 & 1/5 & $0.002\pm0.001$ & $0.22\pm0.07$ & $6.3{\times}10^{-1}$ & $3.4{\times}10^{-2}$ & $1.59$ \\
\botrule
\end{tabular*}
\end{table}

\section{Alternative noise law}\label{app:altnoise}

The main experiments use multiplicative uniform perturbations because they preserve the local signal scale. To test whether the weak-versus-strong gap depends on that particular law, Table~\ref{tab:gaussian-noise} repeats the $10\%$ FADE and fractional Burgers comparisons with independent additive Gaussian noise,
\[
\widetilde u=u+0.10\,\operatorname{std}(u)Z,\qquad Z_{ij}\sim\mathcal N(0,1),
\]
using the same five seeds, search ranges, and optimisation budgets as the main experiments. The same noisy field is supplied to both methods for every seed. Weak-Pareto recovers the correct support in all five runs for both equations and the complete operator in $4/5$ seeds for each; the strong-form framework does not recover the correct support in any run for either equation. The result is consistent with Corollary~\ref{cor:multiplicative}: the averaging advantage of the weak library persists under additive Gaussian noise.

\begin{table}[t]
\centering
\caption{Additive-Gaussian robustness with noise standard deviation equal to $10\%$ of the clean-field standard deviation (five seeds). Parameter errors are conditioned on support/power recovery; $\mathcal E_{\mathrm{fit}}$ is over all seeds.}\label{tab:gaussian-noise}
\footnotesize
\setlength{\tabcolsep}{0.5pt}
\begin{tabular*}{\textwidth}{@{\extracolsep\fill}llccccc}
\toprule
Benchmark & Method & Supp. & Op. & $e_\beta^{\max}$ & $e_\xi^{\max}$ & $\mathcal E_{\mathrm{fit}}$ \\
\midrule
\multirow{2}{*}{FADE} & Weak & 5/5 & 4/5 & $0.134\pm0.033$ & $0.245\pm0.123$ & $0.030\pm0.003$ \\
 & Strong & 0/5 & 0/5 & -- & -- & $0.838\pm0.001$ \\
\multirow{2}{*}{Frac. Burgers} & Weak & 5/5 & 4/5 & $0.004\pm0.003$ & $0.017\pm0.011$ & $0.080\pm0.004$ \\
 & Strong & 0/5 & 0/5 & -- & -- & $0.831\pm0.001$ \\
\botrule
\end{tabular*}
\end{table}

\section{Forward-model validation}\label{app:forward}

Weak residual and trajectory reproduction measure different properties. We therefore integrate each representative discovered FPDE from the benchmark initial condition and report
\begin{equation}
e_{\mathrm{field}}=\frac{\lVert u_{\mathrm{disc}}-u\rVert_2}{\lVert u\rVert_2+\eps}.
\label{eq:efield}
\end{equation}
The spatial integrator uses the benchmark's declared periodic Riesz or directional multiplier. Time integration uses the exact exponential propagator for linear integer-time equations, an adaptively substepped and dealiased Runge--Kutta scheme for Burgers, and the Caputo L1 scheme for fractional time. The same solver is also run with the true parameters to quantify numerical discrepancy (Table~\ref{tab:forward}). For time--space reaction--diffusion, the generator and evaluator share the L1 discretisation, so this row is a self-consistency check. FADE is generated semi-analytically; both discovered-model errors are no larger than the true-parameter solver discrepancy, making those rows inconclusive. All simulations start from the discovery initial condition and cover the same horizon, so the test measures trajectory reproduction. The ADE control is omitted because the generic periodic evaluator uses a different operator convention from that finite-domain dataset; even the true-parameter simulation exceeds the self-consistency threshold.

\begin{table}[t]
\caption{Forward-model validation: normalised field error $e_{\mathrm{field}}$ between the discovered-model simulation and the clean reference, with the true-parameter solver discrepancy for context. This is a same-initial-condition trajectory-reproduction test; held-out prediction is outside its scope.}\label{tab:forward}
\centering
\footnotesize
\setlength{\tabcolsep}{3pt}
\begin{tabular*}{\textwidth}{@{\extracolsep\fill}lccc}
\toprule
Benchmark & Noise & $e_{\mathrm{field}}$ & Solver discrepancy $e_{\mathrm{field}}^{\mathrm{true}}$ \\
\midrule
\multirow{2}{*}{FADE} & 0\% & $4.74{\times}10^{-3}$ & $1.01{\times}10^{-2}$ \\
 & 10\% & $9.93{\times}10^{-3}$ & $1.01{\times}10^{-2}$ \\
\multirow{2}{*}{Frac. RD (space)} & 0\% & $9.12{\times}10^{-5}$ & $1.74{\times}10^{-16}$ \\
 & 5\% & $2.73{\times}10^{-3}$ & $1.74{\times}10^{-16}$ \\
\multirow{2}{*}{Frac. RD (time--space)} & 0\% & $2.00{\times}10^{-4}$ & $1.81{\times}10^{-16}$ \\
 & 5\% & $3.15{\times}10^{-3}$ & $1.81{\times}10^{-16}$ \\
\multirow{2}{*}{Frac. Burgers} & 0\% & $6.34{\times}10^{-4}$ & $4.98{\times}10^{-6}$ \\
 & 10\% & $2.05{\times}10^{-3}$ & $4.98{\times}10^{-6}$ \\
\botrule
\end{tabular*}
\end{table}

\section{Closed-form discovered equations}\label{app:equations}

Table~\ref{tab:equations} lists representative clean and noisy discoveries, including the selected powers, orders, and post-pruning coefficients. Each representative seed has the median worst spatial-order error, avoiding a best-case presentation; a dagger marks failure of the operator-structure criterion. The noisy reaction--diffusion representatives attain their lower temporal bounds ($\widehat{\alpha}=0.8000$ and $0.6500$); these boundary-attaining estimates are consistent with Appendix~\ref{app:discrete} and are not interior optima.

\begin{table}[t]
\caption{Representative discovered equations versus ground truth. The representative seed has the median $e_\beta^{\max}$; a dagger marks an unrecovered operator structure. Exact integer temporal modes are printed as $\partial_t$. Exact integer orders in the ground-truth equations use derivative shorthand, whereas continuously estimated spatial orders are printed numerically, including estimates that round to $1.00$.}\label{tab:equations}
\centering
\footnotesize
\setlength{\tabcolsep}{3pt}
\begin{tabular*}{\textwidth}{@{\extracolsep\fill}lll}
\toprule
Benchmark & Noise & Discovered equation (representative seed) \\
\midrule
\multicolumn{3}{l}{\emph{FADE} --- true: $D_t^{0.80}u=-1.00\,u_x+0.50\,D_x^{1.70}u$} \\
\multirow{2}{*}{FADE} & 0\% & $D_t^{0.7990}u=-0.99\,D_x^{1.00}u+0.49\,D_x^{1.73}u$ \\
 & 10\% & $D_t^{0.8038}u=-1.11\,D_x^{1.03}u+0.63\,D_x^{1.61}u$ \\
\midrule
\multicolumn{3}{l}{\emph{Frac. RD (space)} --- true: $\partial_tu=0.04\,u+0.18\,\mathcal R_{1.65}u$} \\
\multirow{2}{*}{\shortstack[l]{Frac. RD\\(space)}} & 0\% & $\partial_tu=0.04\,u+0.18\,\mathcal R_{1.65}u$ \\
 & 5\% & $D_t^{0.8000}u=-0.06\,\mathcal R_{0.27}u+0.20\,\mathcal R_{1.53}u$\textsuperscript{\(\dagger\)} \\
\midrule
\multicolumn{3}{l}{\emph{Frac. RD (time--space)} --- true: $D_t^{0.82}u=0.03\,u+0.12\,\mathcal R_{1.55}u$} \\
\multirow{2}{*}{\shortstack[l]{Frac. RD\\(time--space)}} & 0\% & $D_t^{0.8208}u=0.03\,u+0.12\,\mathcal R_{1.55}u$ \\
 & 5\% & $D_t^{0.6500}u=-0.06\,\mathcal R_{0.29}u+0.16\,\mathcal R_{1.44}u$\textsuperscript{\(\dagger\)} \\
\midrule
\multicolumn{3}{l}{\emph{Frac. Burgers} --- true: $\partial_tu=0.25\,D_x^{1.70}u-1.00\,u\,u_x$} \\
\multirow{2}{*}{Frac. Burgers} & 0\% & $\partial_tu=0.25\,D_x^{1.70}u-1.00\,u\,D_x^{1.00}u$ \\
 & 10\% & $\partial_tu=0.25\,D_x^{1.70}u-1.00\,u\,D_x^{1.00}u$ \\
\botrule
\end{tabular*}
\end{table}

\section{Two-dimensional settings and sensitivity}\label{app:twod}

The two-dimensional results use an example extension distributed in Online Resource 1. It reuses the one-dimensional implementations of the Gaussian test basis and the L1 Caputo adjoint, while its data generator includes a complex-capable evaluator of $E_{\alpha,1}$ because directional Fourier multipliers are complex. The generator and discovery code are independent in time: data use semi-analytic Mittag--Leffler propagation, whereas discovery evaluates the Caputo target through the transposed L1 matrix. The field contains five conjugate Fourier-mode pairs, and the supplied archive records the data hashes, software environment, per-seed estimates, validation curves, and complete summary.

The default spatial width applies the one-dimensional paper rule independently to $x$ and $y$. Table~\ref{tab:twod-width} compares three fixed fractions of the domain length on Benchmark~\eqref{eq:twod-b} at $5\%$ noise. The inherited rule and $0.10L$ recover the complete structure in every seed, whereas $0.16L$ and $0.24L$ select only two terms. This complements Appendix~\ref{app:ksens}, where narrower localised Gaussian test-function windows eventually degrade operator recovery: no two-dimensional tuning was needed here, but test-window scale remains problem dependent.

\begin{table}[t]
\centering
\caption{Spatial-window sensitivity for Benchmark~\eqref{eq:twod-b} at $5\%$ multiplicative noise (five seeds). Errors are conditioned on support and direction recovery.}\label{tab:twod-width}
\footnotesize
\setlength{\tabcolsep}{1.2pt}
\begin{tabular*}{\textwidth}{@{\extracolsep\fill}lccccc}
\toprule
Width rule & \shortstack{Support/\\direction} & Operator & $e_\alpha$ & $e_\beta^{\max}$ & $e_\xi^{\max}$ \\
\midrule
Inherited paper rule & $5/5$ & $5/5$ & $0.00177\pm0.00041$ & $0.00483\pm0.00045$ & $0.01336\pm0.00197$ \\
$0.10L$ & $5/5$ & $5/5$ & $0.00150\pm0.00034$ & $0.00875\pm0.00080$ & $0.00669\pm0.00119$ \\
$0.16L$ & $0/5$ & $0/5$ & -- & -- & -- \\
$0.24L$ & $0/5$ & $0/5$ & -- & -- & -- \\
\botrule
\end{tabular*}
\end{table}

Table~\ref{tab:twod-resolution} compares Benchmark~\eqref{eq:twod-a} at the reported and refined spatial grids. Both resolutions recover all five noise levels and seeds. The larger tensor-product set of weak rows improves the coefficient error at $20\%$ noise while leaving the order errors of comparable scale; the result serves as a grid-stability check. Sparse observation patterns would require a different row-construction and validation analysis and are outside the present scope.

\begin{table}[t]
\centering
\caption{Spatial-resolution check for Benchmark~\eqref{eq:twod-a}. Recovery counts aggregate 25 runs; errors are at $20\%$ noise.}\label{tab:twod-resolution}
\footnotesize
\setlength{\tabcolsep}{1.2pt}
\begin{tabular*}{\textwidth}{@{\extracolsep\fill}lccccc}
\toprule
\shortstack{Field\\grid} & \shortstack{Weak\\rows} & Operator & $e_\alpha$ & $e_\beta^{\max}$ & $e_\xi^{\max}$ \\
\midrule
$90\times80\times80$ & $48{,}000$ & $25/25$ & $0.00081\pm0.00054$ & $0.00217\pm0.00140$ & $0.0111\pm0.0023$ \\
$90\times112\times112$ & $94{,}080$ & $25/25$ & $0.00088\pm0.00034$ & $0.00270\pm0.00088$ & $0.00843\pm0.00271$ \\
\botrule
\end{tabular*}
\end{table}

\FloatBarrier

\section{Comparison with a contemporary neural fractional-discovery framework}\label{app:yu}
The Strong-Pareto versus Weak-Pareto (no polishing) ablation isolates the candidate-library effect under a common selector. We also compare Weak-Pareto with Yu et al.~\cite{yu2025fde} on the advection--diffusion benchmark. The full Yu et al. framework combines neural field reconstruction, automatic differentiation of integer derivatives, pointwise Gauss--Jacobi fractional derivatives, sparse regression, and global optimisation. An optimiser-only variant replaces the neural reconstruction with a deterministic quintic spline to isolate the downstream derivative and selection stages.

The comparison shares the FADE field, nominal equation, target orders, noise realisations, seeds, recovery tolerances, and CPU environment. It is not operator-identical: Weak-Pareto uses the periodic directional spectral operator that generated the field, whereas the Yu adaptation retains the one-sided finite-terminal Gauss--Jacobi approximation; the Riesz reaction--diffusion cases are outside its declared operator scope. The adapter fixes the training--validation split, estimates coefficients and the STRidge penalty from training rows only, and uses deterministic seeds. Its changes, budgets, and provenance controls are documented in Online Resource 1. Because the upstream snapshot is not redistributed, byte identity remains unverified. Runtime covers each complete fitting framework. Table~\ref{tab:yu} reports this controlled comparison.

\begin{table}[t]
\centering
\caption{Weak-Pareto versus the adapted neural fractional-discovery framework of Yu et al.~\cite{yu2025fde} on the advection--diffusion benchmark (five seeds). All rows use the same data, noise realisations, recovery tolerances, scoring convention, and CPU environment. Errors are mean $\pm$ standard deviation over runs with correct support and power; runtime is mean $\pm$ standard deviation over all five seeds. Table~\ref{tab:runtime} reports a separate uncached single-seed timing.}\label{tab:yu}
\footnotesize
\setlength{\tabcolsep}{1.0pt}
\begin{tabular*}{\textwidth}{@{\extracolsep\fill}lcccccc}
\toprule
Method & Noise (\%) & Op. rec. & $e_\alpha$ & $e_\beta^{\max}$ & $e_\xi^{\max}$ & Runtime (s) \\
\midrule
\multirow{3}{*}{Weak-Pareto} & 0 & 5/5 & $0.0010\pm0.0000$ & $0.026\pm0.000$ & $0.026\pm0.000$ & $6.6\pm0.3$ \\
 & 1 & 5/5 & $0.0007\pm0.0002$ & $0.025\pm0.005$ & $0.022\pm0.007$ & $6.7\pm0.3$ \\
 & 5 & 5/5 & $0.0008\pm0.0006$ & $0.020\pm0.016$ & $0.028\pm0.024$ & $6.7\pm0.2$ \\
\midrule
\multirow{3}{*}{Yu framework} & 0 & 1/5 & $0.012\pm0.002$ & $0.163\pm0.017$ & $0.097\pm0.010$ & $323\pm52$ \\
 & 1 & 2/5 & $0.011\pm0.001$ & $0.152\pm0.018$ & $0.090\pm0.009$ & $332\pm10$ \\
 & 5 & 4/5 & $0.004\pm0.003$ & $0.081\pm0.052$ & $0.044\pm0.027$ & $279\pm62$ \\
\midrule
\multirow{3}{*}{Yu optimiser-only} & 0 & 0/5 & $0.017\pm0.000$ & $0.204\pm0.003$ & $0.129\pm0.001$ & $25.4\pm4.3$ \\
 & 1 & 0/5 & -- & -- & -- & $10.8\pm0.6$ \\
 & 5 & 0/5 & -- & -- & -- & $11.6\pm3.0$ \\
\botrule
\end{tabular*}
\end{table}

Weak-Pareto recovers the complete FADE operator in all five seeds at $0\%$, $1\%$, and $5\%$ noise, with order and coefficient errors of a few percent. The adapted neural fractional-discovery framework recovers $1/5$, $2/5$, and $4/5$ operators; the non-monotone counts reflect run-to-run variability, not evidence that noise improves recovery. The optimiser-only variant never recovers the complete operator: at $0\%$ its spatial-order error is about $0.20$, and at positive noise it drops the fractional-diffusion term. On that CPU, Weak-Pareto takes $6.6$--$6.7$~s per run, versus $279$--$332$~s for the adapted framework and $11$--$25$~s for the optimiser-only variant. These are implementation-level timings for the tested configurations.

\end{appendices}

\section*{Acknowledgements}
The authors would like to thank Velmurugan Gandhi for his helpful discussions on fractional differential equations.

\section*{Supplementary material}
\noindent\textbf{Online Resource 1.} Source code, benchmark datasets, archived reference outputs, tutorials, tests, and scripts for reproducing the numerical results and figures reported in this article. We will also maintain the code and reproducibility materials at \url{https://github.com/Pongpisit-Thanasutives/Weak-Pareto}.

\bibliography{references}

\backmatter
\section*{Statements and Declarations}
\footnotesize
\setlength{\parskip}{1pt}
\noindent\textbf{Funding.} Pongpisit Thanasutives is supported by the research fund from the Special Postdoctoral Researcher (SPDR) program at RIKEN, Japan.\par
\noindent\textbf{Competing interests.} The authors have no relevant financial or non-financial interests to disclose.\par
\noindent\textbf{Author contributions.} Pongpisit Thanasutives designed the method and software, performed the investigation and validation, analysed the results, and wrote the manuscript. Yoshinobu Kawahara supervised the work and revised the manuscript. Both authors reviewed and approved the final manuscript.\par
\noindent\textbf{Use of generative AI.} During manuscript preparation, the authors used OpenAI ChatGPT and Anthropic Claude for language revision, manuscript drafting, and code review. The authors reviewed and verified all mathematical arguments, software implementations, results, and final text and take full responsibility for the work.\par
\noindent\textbf{Data and code availability.} Data and source code are provided in the Supplementary Material and will be maintained at \url{https://github.com/Pongpisit-Thanasutives/Weak-Pareto}. Third-party frozen-soil data are not redistributed; access instructions and provenance are documented in the cited paper.

\end{document}